  \providecommand\BibTeX{{%
    \normalfont B\kern-0.5em{\scshape i\kern-0.25em b}\kern-0.8em\TeX}}}
\def\eqref#1{equation~\ref{#1}}
\def\ceil#1{\lceil #1 \rceil}
\def\1{\bm{1}}
\def\vb{{\bm{b}}}
\def\ve{{\bm{e}}}
\def\vw{{\bm{w}}}
\def\vx{{\bm{x}}}
\def\vy{{\bm{y}}}
\def\vz{{\bm{z}}}
\def\mA{{\bm{A}}}
\def\mQ{{\bm{Q}}}
\def\mR{{\bm{R}}}
\def\mW{{\bm{W}}}
\DeclareMathAlphabet{\mathsfit}{\encodingdefault}{\sfdefault}{m}{sl}
\SetMathAlphabet{\mathsfit}{bold}{\encodingdefault}{\sfdefault}{bx}{n}
\def\sN{{\mathbb{N}}}
\def\sR{{\mathbb{R}}}
\def\sZ{{\mathbb{Z}}}
\newcommand{\R}{\mathbb{R}}
\newtheorem{theorem}{Theorem}
\newtheorem{definition}{Definition}
\begin{document}
\fancyhead{}

\title{Compositional Embeddings Using Complementary Partitions for Memory-Efficient Recommendation Systems}

\author{Hao-Jun Michael Shi}
\affiliation{%
  \institution{Northwestern University}
  \city{Evanston}
  \state{Illinois}
}
\email{hjmshi@u.northwestern.edu}

\author{Dheevatsa Mudigere}
\affiliation{%
  \institution{Facebook}
  \city{Menlo Park}
  \state{California}  
}
\email{dheevatsa@fb.com}

\author{Maxim Naumov}
\affiliation{%
  \institution{Facebook}
  \city{Menlo Park}
  \state{California}  
}
\email{mnaumov@fb.com}

\author{Jiyan Yang}
\affiliation{%
  \institution{Facebook}
  \city{Menlo Park}
  \state{California}  
}
\email{chocjy@fb.com}

\renewcommand{\shortauthors}{Shi, et al.}

\begin{abstract}
Modern deep learning-based recommendation systems exploit hundreds to thousands of different categorical features, each with millions of different categories ranging from clicks to posts. To respect the natural diversity within the categorical data, embeddings map each category to a unique dense representation within an embedded space. Since each categorical feature could take on as many as tens of millions of different possible categories, the embedding tables form the primary memory bottleneck during both training and inference. We propose a novel approach for reducing the embedding size in an end-to-end fashion by exploiting \textit{complementary partitions} of the category set to produce a unique embedding vector for each category without explicit definition. By storing multiple smaller embedding tables based on each complementary partition and combining embeddings from each table, we define a unique embedding for each category at smaller memory cost. This approach may be interpreted as using a specific fixed codebook to ensure uniqueness of each category's representation.  Our experimental results demonstrate the effectiveness of our approach over the hashing trick for reducing the size of the embedding tables in terms of model loss and accuracy, while retaining a similar reduction in the number of parameters.
\end{abstract}

\begin{CCSXML}
<ccs2012>
  <concept>
      <concept_id>10010147.10010257.10010293.10010294</concept_id>
      <concept_desc>Computing methodologies~Neural networks</concept_desc>
      <concept_significance>300</concept_significance>
      </concept>
  <concept>
      <concept_id>10002951.10003260.10003272</concept_id>
      <concept_desc>Information systems~Online advertising</concept_desc>
      <concept_significance>500</concept_significance>
      </concept>
  <concept>
      <concept_id>10002951.10003227.10003447</concept_id>
      <concept_desc>Information systems~Computational advertising</concept_desc>
      <concept_significance>500</concept_significance>
      </concept>
 </ccs2012>
\end{CCSXML}

\ccsdesc[300]{Computing methodologies~Neural networks}
\ccsdesc[500]{Information systems~Online advertising}
\ccsdesc[500]{Information systems~Computational advertising}

\keywords{recommendation systems, embeddings, model compression}


\maketitle

\section{Introduction}

The design of modern deep learning-based recommendation models (DLRMs) is challenging because of the need to handle a large number of categorical (or sparse) features. For personalization or click-through rate (CTR) prediction tasks, examples of categorical features could include users, posts, or pages, with hundreds or thousands of these different features \cite{naumov2019deep}. Within each categorical feature, the set of categories could take on many diverse meanings. For example, social media pages could contain topics ranging from sports to movies.

In order to exploit this categorical information, DLRMs utilize embeddings to map each category to a unique dense representation in an embedded space; see \cite{cheng2016wide,he2017neural,wang2017deep,guo2017deepfm,lian2018xdeepfm,zhou2018deepi,zhou2018deep,naumov2019dimensionality}. More specifically, given a set of categories $S$ and its cardinality $|S|$, each categorical instance is mapped to an indexed row vector in an embedding table $\mW \in \sR^{|S| \times D}$, as shown in Figure \ref{fig:embedding}. Rather than predetermining the embedding weights, it has been found that \textit{jointly} training the embeddings with the rest of the neural network is effective in producing accurate models. 

Each categorical feature, however, could take on as many as tens of millions of different possible categories (i.e., $|S| \approx 10^7$), with an embedding vector dimension $D \approx 100$. Because of the vast number of categories, the number of embedding vectors form the primary memory bottleneck within both DLRM training and inference since each table could require multiple GBs to store.\footnote{Note that the relative dimensionality significantly differs from traditional language models, which use embedding vectors of length 100 to 500, with dictionaries of a maximum of hundreds of thousands of words.} 

One natural approach for reducing memory requirements is to decrease the size of the embedding tables by defining a hash function (typically the remainder function) that maps each category to an embedding index, where the embedding size is strictly smaller than the number of categories\footnote{We consider the case where the hashing trick is used primarily for reducing the number of categories. In practice, one may hash the categories for indexing and randomization purposes, then apply a remainder function to reduce the number of categories. Our proposed technique applies to the latter case.} \cite{weinberger2009feature}. However, this approach may blindly map vastly different categories to the same embedding vector, resulting in loss of information and deterioration in model quality. Ideally, one ought to reduce the size of the embedding tables while still producing a \textit{unique} representation for each category in order to respect the natural diversity of the data.

\begin{figure}
    \centering
    \begin{tikzpicture}
        \fill[blue!40!white] (0,0) rectangle (3,4);
        \draw [thick] (0,0) rectangle (3,4) node[color=white, pos=.5] {\Huge $W$};
        \draw [dashed] (0, 0.5) rectangle (3, 0.5);
        \draw [dashed] (0, 3.5) rectangle (3, 3.5);
        \draw [dashed] (0, 3) rectangle (3, 3);
        \draw [dashed] (0, 2.5) rectangle (3, 2.5);
        \draw[thick, decoration={brace,mirror,raise=5pt,amplitude=10pt},decorate] (3,0) -- node[right=15pt] {\Large $|S|$} (3,4);
        \draw[thick, decoration={brace,raise=5pt,amplitude=10pt},decorate] (0,4) -- node[above=15pt] {\Large $D$} (3, 4);
        
        \node at (-0.25, 3.75) {$0$};
        \node at (-0.25, 3.25) {$1$};
        \node at (-0.25, 2.75) {$2$};
        \node at (-0.7, 0.25) {$|S| - 1$};
    \end{tikzpicture}
    \caption{An embedding table.}
    \label{fig:embedding}
\end{figure}
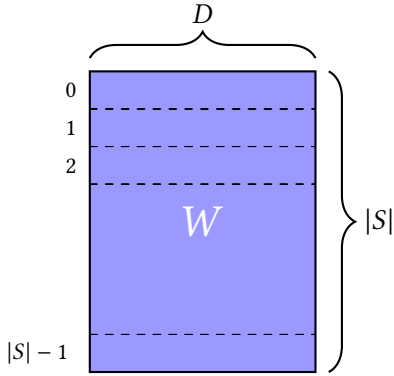


In this paper, we propose an approach for generating a unique embedding for each categorical feature by using \textit{complementary partitions} of the category set to generate \textit{compositional embeddings}, which interact multiple smaller embeddings to produce a final embedding. These complementary partitions could be obtained from inherent characteristics of the categorical data, or enforced artificially to reduce model complexity. We propose concrete methods for artificially defining these complementary partitions and demonstrate their usefulness on modified Deep and Cross (DCN) \cite{wang2017deep} and Facebook DLRM networks \cite{naumov2019deep} on the Kaggle Criteo Ad Display Challenge dataset. These methods are simple to implement, compress the model for both training and inference, do not require any additional pre- or post-training processing, and better preserve model quality than the hashing trick.

\subsection{Main Contributions}\
Our main contributions in this paper are as follows:

\begin{itemize}
    \item We propose a novel method called the \textit{quotient-remainder trick} for reducing the size of the embedding tables while still yielding a unique embedding vector for each category. The trick uses both the quotient and remainder functions to produce two different embeddings (whose embedding tables are of smaller size), and combines these embeddings to produce a final embedding, called a \textit{compositional embedding}. This reduces the number of embedding parameters from $O(|S| D)$ up to $O(\sqrt{|S|} D)$, where $|S|$ is the number of categories and $D$ is the embedding dimension.
    \item We generalize the quotient-remainder trick to compositional embeddings based on \textit{complementary partitions} of the category set. Complementary partitions require each category to be distinct from every other category according to at least one partition. This has the potential to reduce the number of embedding parameters to $O(k |S|^{1/k} D)$ where $k$ is the number of partitions.
    \item The experimental results demonstrate that our compositional embeddings yield better performance than the hashing trick, which is commonly used in practice. Although the best operation for defining the compositional embedding may vary, the element-wise multiplication operation produces embeddings that are most scalable and effective in general.
\end{itemize}

Section \ref{sec: simple example} will provide a simple example to motivate our framework for reducing model complexity by introducing the \textit{quotient-remainder trick}. In Section \ref{sec: comp partitions}, we will define complementary partitions, and provide some concrete examples that are useful in practice. Section \ref{sec: partition-decomposed embeddings} describes our proposed idea of compositional embeddings and clarifies the tradeoffs between this approach and using a full embedding table. Lastly, Section \ref{sec: experiments} gives our experimental results.

\section{Quotient-Remainder Trick}\label{sec: simple example}

Recall that in the typical DLRM setup, each category is mapped to a unique embedding vector in the embedding table. Mathematically, consider a single categorical feature and let $\varepsilon : S \rightarrow \{0, ..., |S| - 1\}$ denote an enumeration of $S$.\footnote{As an example, if the set of categories consist of $S = \{\text{dog}, \text{cat}, \text{mouse}\}$, then a potential enumeration of $S$ is $\varepsilon(\text{dog}) = 0$, $\varepsilon(\text{cat}) = 1$, and $\varepsilon(\text{mouse}) = 2$.} Let $\mW \in \R^{|S| \times D}$ be its corresponding embedding matrix or table, where $D$ is the dimension of the embeddings. We may encode each category (say, category $x \in S$ with index $i = \varepsilon(x)$) with a one-hot vector by $\ve_i \in \R^{|S|}$, then map this to a dense embedding vector $\vx_{\text{emb}} \in \R^D$ by
\begin{equation}
    \vx_{\text{emb}} = \mW^T \ve_i.
\end{equation}
Alternatively, the embedding may also be interpreted as a simple row lookup of the embedding table, i.e. $\vx_{\text{emb}} = \mW_{i,:}$. Note that this yields a memory complexity of $O(|S| D)$ for storing embeddings, which becomes restrictive when $|S|$ is large.

The naive approach of reducing the embedding table is to use a simple hash function \cite{weinberger2009feature}, such as the remainder function, called the \textit{hashing trick}. In particular, given an embedding table of size $m \in \sN$ where $m \ll |S|$, that is, $\widetilde{\mW} \in \R^{m \times D}$, one can define a hash matrix $\mR \in \R^{m \times |S|}$ by:
\begin{equation}
    \mR_{i,j} = 
    \begin{cases}
        1 & \text{if } j \bmod m = i \\
        0 & \text{otherwise.}
    \end{cases}
\end{equation}
Then the embedding is performed by:
\begin{equation}
    \vx_{\text{emb}} = \widetilde{\mW}^T \mR \ve_i.
\end{equation}
This process is summarized in Algorithm \ref{alg: hash trick}.

\begin{algorithm}
\caption{Hashing Trick}
\label{alg: hash trick}
\begin{algorithmic}
\REQUIRE Embedding table $\widetilde{\mW} \in \sR^{m \times D}$, category $x \in S$
\STATE Determine index $i = \varepsilon(x)$ of category $x$.
\STATE Compute hash index $j = i \bmod m$.
\STATE Look up embedding $\vx_{\text{emb}} = \widetilde{\mW}_{j, :}$.
\end{algorithmic}
\end{algorithm}

Although this approach significantly reduces the size of the embedding matrix from $O(|S| D)$ to $O(m D)$ since $m \ll |S|$, it naively maps multiple categories to the same embedding vector, resulting in loss of information and rapid deterioration in model quality. The key observation is that this approach does not yield a \textit{unique} embedding for each \textit{unique} category and hence does not respect the natural diversity of the categorical data in recommendation systems.

To overcome this, we propose the \textit{quotient-remainder trick}. Assume for simplicity that $m$ divides $|S|$ (although this does not have to hold in order for the trick to be applied). Let ``$\backslash$'' denote integer division or the quotient operation. Using two complementary functions -- the integer quotient and remainder functions -- we can produce two separate embedding tables and combine the embeddings in such a way that a unique embedding for each category is produced. This is formalized in Algorithm 2. 

\begin{algorithm}
\caption{Quotient-Remainder Trick}
\label{alg: quotient-remainder}
\begin{algorithmic}
\REQUIRE Embedding tables $\mW_1 \in \sR^{m \times D}$ and $\mW_2 \in \sR^{(|S| / m) \times D}$, category $x \in S$
\STATE Determine index $i = \varepsilon(x)$ of category $x$.
\STATE Compute hash indices $j = i \bmod m$ and $k = i \backslash m$.
\STATE Look up embeddings $\vx_{\text{rem}} = (\mW_1)_{j, :}$ and $\vx_{\text{quo}} = (\mW_2)_{k, :}$ .
\STATE Compute $\vx_{\text{emb}} = \vx_{\text{rem}} \odot \vx_{\text{quo}}$.
\end{algorithmic}
\end{algorithm}

More rigorously, define two embedding matrices: $\mW_1 \in \R^{m \times D}$ and $\mW_2 \in \R^{(|S| / m) \times D }$. Then define an additional hash matrix $\mQ \in \R^{(|S| / m) \times |S|}$
\begin{equation}
    \mQ_{i,j} = 
    \begin{cases}
        1 & \text{if } j \backslash m = i \\
        0 & \text{otherwise.}
    \end{cases}
\end{equation}
Then we obtain our embedding by
\begin{equation}
    \vx_{\text{emb}} = \mW_1^T \mR \ve_i \odot \mW_2^T \mQ \ve_i
\end{equation}
where $\odot$ denotes element-wise multiplication. This trick results in a memory complexity of $O(\frac{|S|}{m} D + m D)$, a slight increase in memory compared to the hashing trick but with the benefit of producing a unique representation. We demonstrate the usefulness of this method in our experiments in Section \ref{sec: experiments}. 


\section{Complementary Partitions}\label{sec: comp partitions}

The quotient-remainder trick is, however, only a single example of a more general framework for decomposing embeddings. Note that in the quotient-remainder trick, each operation (the quotient or remainder) partitions the set of categories into multiple ``buckets'' such that every index in the same ``bucket'' is mapped to the same vector. However, by combining embeddings from both the quotient and remainder together, one is able to generate a distinct vector for each index. 

Similarly, we want to ensure that each element in the category set may produce its own unique representation, even across multiple partitions. Using basic set theory, we formalize this concept to a notion that we call \textit{complementary partitions}. Let $[x]_P$ denote the equivalence class of $x \in S$ induced by partition $P$.\footnote{We slightly abuse notation by denoting the equivalence class by its partition rather than its equivalence relation for simplicity. For more details on set partitions, equivalence classes, and equivalence relations, please refer to the Appendix.}

\begin{definition}
Given set partitions $P_1, P_2, ..., P_k$ of set $S$, the set partitions are complementary if for all $a, b \in S$ such that $a \neq b$, there exists an $i$ such that $[a]_{P_i} \neq [b]_{P_i}$.
\end{definition}

As a concrete example, consider the set $S = \{0, 1, 2, 3, 4\}$. Then the following three set partitions are complementary: 
\begin{equation*}
\{ \{0\}, \{1, 3, 4\}, \{2\} \}, \{ \{0, 1, 3\}, \{2, 4\} \}, \{\{0, 3\}, \{1, 2, 4\} \}.
\end{equation*}
In particular, one can check that each element is distinct from every other element according to at least one of these partitions.  

Note that each equivalence class of a given partition designates a ``bucket'' that is mapped to an embedding vector. Hence, each partition corresponds to a single embedding table. Under complementary partitions, after each embedding arising from each partition is combined through some operation, each index is mapped to a distinct embedding vector, as we will see in Section \ref{sec: partition-decomposed embeddings}.

\subsection{Examples of Complementary Partitions}

Using this definition of complementary partitions, we can abstract the quotient-remainder trick and consider other more general complementary partitions. These examples are proved in the Appendix. For notational simplicity, we denote the set $\mathcal{E}(n) = \{0, 1, ..., n - 1\}$ for a given $n \in \sN$.

\begin{enumerate}
    \item Naive Complementary Partition: If 
    \begin{equation*}
        P = \{ \{x\} : x \in S \}
    \end{equation*}
    then $P$ is a complementary partition by definition. This corresponds to a full embedding table with dimension $|S| \times D$.
    \item Quotient-Remainder Complementary Partitions: Given $m \in \sN$, the partitions
    \begin{align*}
    P_1 & = \{ \{ x \in S : \varepsilon(x) \backslash m = l \} : l \in \mathcal{E}(\ceil{|S|/m}) \} \\
    P_2 & = \{ \{ x \in S : \varepsilon(x) \bmod m = l \} : l \in \mathcal{E}(m) \}
    \end{align*} 
    are complementary. This corresponds to the quotient-remainder trick in Section \ref{sec: simple example}.
    \item Generalized Quotient-Remainder Complementary Partitions: Given $m_i \in \sN$ for $i = 1, ..., k$ such that $|S| \leq \prod_{i = 1}^k m_i$ , we can recursively define complementary partitions 
    \begin{align*}
        P_1 & = \left\{ \{x \in S : \varepsilon(x) \bmod m_1 = l\} : l \in \mathcal{E}(m_1) \right\} \\ 
        P_j & = \left\{ \{ x \in S : \varepsilon(x) \backslash M_j \bmod m_j = l \} : l \in \mathcal{E}(m_j) \right\} 
    \end{align*}
    where $M_j = \prod_{i = 1}^{j - 1} m_i$ for $j = 2, ..., k$. This generalizes the quotient-remainder trick. 
    \item Chinese Remainder Partitions: Consider a pairwise coprime factorization greater than or equal to $|S|$, that is, $|S| \leq \prod_{i = 1}^k m_i$ for $m_i \in \sN$ for all $i = 1, ..., k$ and $\gcd(m_i, m_j) = 1$ for all $i \neq j$. Then we can define the complementary partitions
    \begin{equation*}
        P_j = \{ \{ x \in S : \varepsilon(x) \bmod m_j = l \} : l \in \mathcal{E}(m_j) \}
    \end{equation*}
    for $j = 1, ..., k$. 
\end{enumerate}

More arbitrary complementary partitions could also be defined depending on the application. Returning to our car example, one could define different partitions based on the year, make, type, etc. Assuming that the unique specification of these properties yields a unique car, these partitions would indeed be complementary. In the following section, we will demonstrate how to exploit this structure to reduce memory complexity.

\section{Compositional Embeddings Using Complementary Partitions}\label{sec: partition-decomposed embeddings}

Generalizing our approach in Section \ref{sec: simple example}, we would like to create an embedding table for each partition such that each equivalence class is mapped to an embedding vector. These embeddings could either be combined using some \textit{operation} to generate a compositional embedding or used directly as separate sparse features (which we call the \textit{feature generation} approach). The feature generation approach, although effective, may significantly increase the amount of parameters needed by adding additional features while not utilizing the inherent structure that the complementary partitions are formed from the same initial categorical feature.

More rigorously, consider a set of complementary partitions $P_1, P_2, ..., P_k$ of the category set $S$. For each partition $P_j$, we can create an embedding table $\mW_j \in \R^{|P_j| \times D_j}$ where each equivalence class $[x]_{P_j}$ is mapped to an embedding vector indexed by $i_j$ and $D_j \in \sN$ is the embedding dimension for embedding table $j$. Let $p_j : S \rightarrow \{0, ..., |P_j| - 1\}$ be the function that maps each element $x \in S$ to its corresponding equivalence class's embedding index, i.e. $x \mapsto i_j$. 

To generate our \textit{(operation-based) compositional embedding}, we interact all of the corresponding embeddings from each embedding table for our given category to obtain our final embedding vector
\begin{equation}
    \vx_{\text{emb}} = \omega(\mW_1^T \ve_{p_1(x)}, \mW_2^T \ve_{p_2(x)}, ..., \mW_k^T \ve_{p_k(x)} ) 
\end{equation}
where $\omega : \sR^{D_1} \times ... \times \sR^{D_k} \rightarrow \R^D$ is an operation function. Examples of the operation function include (but are not limited to): 
\begin{enumerate}
    \item Concatenation: Suppose $D = \sum_{i = 1}^k D_i$, then $\omega (\vz_1, ..., \vz_k) = [\vz_1^T, ..., \vz_k^T]^T$.
    \item Addition: Suppose $D_j = D$ for all $j$, then $\omega (\vz_1, ..., \vz_k) = \vz_1 + ... + \vz_k$.
    \item Element-wise Multiplication: Suppose $D_j = D$ for all $j$, then $\omega (\vz_1, ..., \vz_k) = \vz_1 \odot ... \odot \vz_k$\footnote{This is equivalent to factorizing the embeddings into the product of tensorized embeddings, i.e. if $\mW \in \sR^{|P_1| \times ... \times |P_k| \times D}$ is a $(k + 1)$-dimensional tensor containing all embeddings and $\mW_j \in \sR^{|P_j| \times D}$ for $j = 1, ..., k$ is the embedding table for partition $P_j$, then
    \begin{equation*}
        \mW[:, ..., :, d] = \mW_1[:, d] \otimes ... \otimes \mW_k[:, d]
    \end{equation*}
    for $d$ fixed, where $\otimes$ denotes the tensor outer product. This is similar to \cite{khrulkov2019tensorized} but instead applied vector-wise rather than component-wise.}.
\end{enumerate}

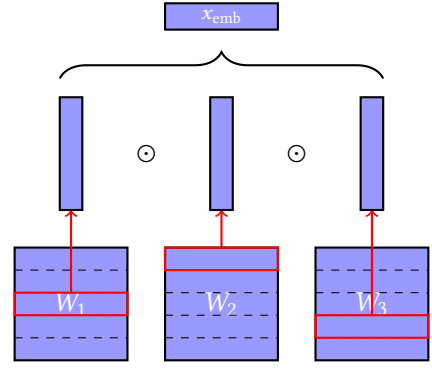
\begin{figure}
    \centering
    \begin{tikzpicture}
        \fill[blue!40!white] (0,0) rectangle (1.5,1.5);
        \draw[thick] (0,0) rectangle (1.5,1.5) node[color=white, pos=.5] {\Large $W_1$};
        \draw[dashed] (0, 1.2) -- (1.5, 1.2);
        \draw[dashed] (0, 0.9) -- (1.5, 0.9);
        \draw[dashed] (0, 0.6) -- (1.5, 0.6);
        \draw[dashed] (0, 0.3) -- (1.5, 0.3);
        
        \fill[blue!40!white] (2,0) rectangle (3.5,1.5);
        \draw[thick] (2,0) rectangle (3.5,1.5) node[color=white, pos=.5] {\Large $W_2$};
        \draw[dashed] (2, 1.2) -- (3.5, 1.2);
        \draw[dashed] (2, 0.9) -- (3.5, 0.9);
        \draw[dashed] (2, 0.6) -- (3.5, 0.6);
        \draw[dashed] (2, 0.3) -- (3.5, 0.3);

        \fill[blue!40!white] (4,0) rectangle (5.5,1.5);
        \draw[thick] (4,0) rectangle (5.5,1.5) node[color=white, pos=.5] {\Large $W_3$};
        \draw[dashed] (4, 1.2) -- (5.5, 1.2);
        \draw[dashed] (4, 0.9) -- (5.5, 0.9);
        \draw[dashed] (4, 0.6) -- (5.5, 0.6);
        \draw[dashed] (4, 0.3) -- (5.5, 0.3);
        
        \draw[thick, color=red] (0, 0.9) rectangle (1.5, 0.6);
        \draw[thick, color=red] (2, 1.5) rectangle (3.5, 1.2);
        \draw[thick, color=red] (4, 0.6) rectangle (5.5, 0.3);
        
        \draw[->, thick, color=red] (0.75, 0.9) -- (0.75, 2);
        \draw[->, thick, color=red] (2.75, 1.5) -- (2.75, 2);
        \draw[->, thick, color=red] (4.75, 0.6) -- (4.75, 2);

        \fill[blue!40!white] (0.6, 2) rectangle (0.9,3.5);
        \draw[thick] (0.6, 2) rectangle (0.9 , 3.5);
        \fill[blue!40!white] (2.6, 2) rectangle (2.9,3.5);
        \draw[thick] (2.6, 2) rectangle (2.9 , 3.5);
        \fill[blue!40!white] (4.6, 2) rectangle (4.9,3.5);
        \draw[thick] (4.6, 2) rectangle (4.9 , 3.5);
        
        \node at (1.75, 2.75) {\Large $\odot$} ;
        \node at (3.75, 2.75) {\Large $\odot$} ;

        \draw[thick, decoration={brace, amplitude=10pt},decorate] (0.6,3.75) -- (4.9, 3.75);
        
        \fill[blue!40!white] (2, 4.4) rectangle (3.5, 4.75);
        \draw[thick] (2, 4.4) rectangle (3.5, 4.75) node[color=white, pos=.5] {\small $x_{\text{emb}}$};

    \end{tikzpicture}
    \caption{Visualization of compositional embeddings with element-wise multiplication operation. The red arrows denote the selection of the embedding vector for each embedding table.}
    \label{fig:operation-based embeddings}
\end{figure}

One can show that this approach yields a unique embedding for each category under simple assumptions. We show this in the following theorem (proved in the Appendix). For simplicity, we will restrict ourselves to the concatenation operation.

\begin{theorem}\label{thm: uniqueness}
Assume that the vectors in each embedding table $\mW_j = \left[ \vw_1^j, ..., \vw_{|P_j|}^j \right]^T$ are distinct, that is $\vw_i^j \neq \vw_{\hat{i}}^j$ for $i \neq \hat{i}$ for all $j = 1, ..., k$. If the concatenation operation is used, then the compositional embedding of any category is unique, i.e. if $x, y \in S$ and $x \neq y$, then $\vx_{\text{emb}} \neq \vy_{\text{emb}}$.
\end{theorem}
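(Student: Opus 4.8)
The plan is to unwind the concatenation operation and then reduce the claim to exhibiting a single coordinate block in which the two embeddings must disagree, invoking the defining property of complementary partitions exactly once. First I would observe that, under the concatenation operation, the compositional embedding of a category $x$ is the stacked vector whose $j$-th block is $\mW_j^T \ve_{p_j(x)}$. Since $\ve_{p_j(x)}$ is the one-hot indicator of the index $p_j(x)$, this block is precisely the row of $\mW_j$ indexed by $p_j(x)$ — one of the vectors $\vw_1^j, \ldots, \vw_{|P_j|}^j$. Thus $\vx_{\text{emb}}$ and $\vy_{\text{emb}}$ are the concatenations of the selected rows $(\vw^1_{p_1(x)}, \ldots, \vw^k_{p_k(x)})$ and $(\vw^1_{p_1(y)}, \ldots, \vw^k_{p_k(y)})$ respectively, and two concatenated vectors are unequal as soon as they differ in any one block.

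The heart of the argument is then to locate such a block. Given $x, y \in S$ with $x \neq y$, the definition of complementary partitions guarantees an index $i$ for which $[x]_{P_i} \neq [y]_{P_i}$, i.e. $x$ and $y$ lie in distinct equivalence classes of $P_i$. Because $p_i$ assigns distinct indices to distinct equivalence classes (it is the relabeling $[\,\cdot\,]_{P_i} \mapsto i_j$ of the blocks of $P_i$ by $\{0, \ldots, |P_i| - 1\}$), this yields $p_i(x) \neq p_i(y)$. I would then invoke the hypothesis that the rows of each table are distinct: since $p_i(x)$ and $p_i(y)$ are different row indices into $\mW_i$, the selected rows satisfy $\vw^i_{p_i(x)} \neq \vw^i_{p_i(y)}$. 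Hence the $i$-th block of $\vx_{\text{emb}}$ differs from the $i$-th block of $\vy_{\text{emb}}$, and therefore $\vx_{\text{emb}} \neq \vy_{\text{emb}}$.

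There is no deep obstacle here; the argument is essentially a direct chain of implications, and the only step requiring care is the bookkeeping link $[x]_{P_i} \neq [y]_{P_i} \implies p_i(x) \neq p_i(y)$, which rests on $p_i$ being a well-defined injection on the set of equivalence classes rather than on raw elements of $S$. The main thing to get right is the notation: keeping the enumeration $\varepsilon$, the per-partition index maps $p_j$, and the within-table row indexing consistent (in particular reconciling the $1,\ldots,|P_j|$ labeling of the rows $\vw^j_\cdot$ in the statement with the $\{0,\ldots,|P_j|-1\}$ codomain of $p_j$). I would close by remarking that concatenation is the only operation that preserves all block information verbatim, which is why the theorem is stated for it; for \emph{addition} or \emph{element-wise multiplication} the same block-disagreement need not survive the operation $\omega$, so those cases would require additional assumptions and are not claimed here.
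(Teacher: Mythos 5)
Your proof is correct and follows essentially the same route as the paper's: locate, via the definition of complementary partitions, an index $i$ with $[x]_{P_i} \neq [y]_{P_i}$, use distinctness of the rows of $\mW_i$ to get differing $i$-th blocks, and conclude the concatenations differ. Your explicit justification of the step $[x]_{P_i} \neq [y]_{P_i} \implies p_i(x) \neq p_i(y)$ is a small clarification the paper leaves implicit, but the argument is the same.
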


This approach reduces the memory complexity of storing the entire embedding table $O(|S|D)$ to $O(|P_1|D_1 + |P_2|D_2 + ... + |P_k|D_k)$. Assuming $D_1 = D_2 = ... = D_k = D$ and $|P_j|$ can be chosen arbitrarily, this approach yields an optimal memory complexity of $O(k |S|^{1/k} D)$, a stark improvement over storing and utilizing the full embedding table. This approach is visualized in Figure \ref{fig:operation-based embeddings}.

\subsection{Path-Based Compositional Embeddings}

An alternative approach for generating embeddings is to define a different set of transformations for each partition (aside from the first embedding table). In particular, we can use a single partition to define an initial embedding table then pass our initial embedding through a composition of functions determined by the other partitions to obtain our final embedding vector. 

More formally, given a set of complementary partitions $P_1, P_2, ..., P_k$ of the category set $S$, we can define an embedding table $\mW \in \R^{|P_1| \times D_1}$ for the first partition, then define sets of functions $\mathcal{M}_j = \{ M_{j, i} : \R^{D_{j - 1}} \rightarrow \R^{D_j} : i \in \{1, ..., |P_j|\} \}$ for every other partition. As before, let $p_j : S \rightarrow \{1, ..., |P_j|\}$ be the function that maps each category to its corresponding equivalence class's embedding index. 

To obtain the embedding for category $x \in S$, we can perform the following transformation
\begin{equation}
    \vx_{\text{emb}} = (M_{k, p_k(x)} \circ ... \circ M_{2, p_2(x)})( \mW \ve_{p_1(x)} ).
\end{equation}
We call this formulation of embeddings \textit{path-based compositional embeddings} because each function in the composition is determined based on the unique set of equivalence classes from each partition, yielding a unique ``path'' of transformations. These transformations may contain parameters that also need to be trained concurrently with the rest of the network. Examples of the function $M_{j, i}$ could include:
\begin{enumerate}
    \item Linear Function: If $\mA \in \R^{D_j \times D_{j - 1}}$ and $\vb \in \R^{D_j}$ are parameters, then $M_{j, i}(\vz) = \mA \vz + \vb$.
    \item Multilayer Perceptron (MLP): Let $L$ be the number of layers. Let $d_0 = D_{j - 1}$ and $d_L = D_j$ and $d_j \in \sN$ for $j = 1, ..., k - 1$ denote the number of nodes at each layer. Then if $\mA_1 \in \R^{d_1 \times d_0}$, $\mA_2 \in \R^{d_2 \times d_1}$, ..., $\mA_L \in \R^{d_L \times d_{L - 1}}$, $\vb_1 \in \R^{d_1}$, $\vb_2 \in \R^{d_2}$, ..., $\vb_L \in \R^{d_L}$ are parameters, and $\sigma : \R \rightarrow \R$ is an activation function (say, ReLU or sigmoid function) that is applied componentwise, then 
    \begin{equation*}
        M_{j, i}(\vz) = \mA_L \sigma( ... \mA_2 \sigma( \mA_1 \vz + \vb_1 ) + \vb_2 ... ) + \vb_L.
    \end{equation*}
\end{enumerate}

Unlike operation-based compositional embeddings, path-based compositional embeddings require non-embedding parameters within the function to be learned, which may complicate training. The reduction in memory complexity also depends on how these functions are defined and how many additional parameters they add. For linear functions or MLPs with small fixed size, one can maintain the $O(k |S|^{1/k} D)$ complexity. This is visualized in Figure \ref{fig:path-based embeddings}.

\begin{figure}
    \centering
    \begin{tikzpicture}

        \fill[blue!40!white] (2,0) rectangle (3.5,1.5);
        \draw[thick] (2,0) rectangle (3.5,1.5) node[color=white, pos=.5] {\Large $W$};
        \draw[dashed] (2, 1.2) -- (3.5, 1.2);
        \draw[dashed] (2, 0.9) -- (3.5, 0.9);
        \draw[dashed] (2, 0.6) -- (3.5, 0.6);
        \draw[dashed] (2, 0.3) -- (3.5, 0.3);

        \draw[thick, color=red] (2, 1.5) rectangle (3.5, 1.2);

        \fill[blue!40!white] (0,2.25) -- (0.75, 3.5) -- (1.5,2.25) -- (0, 2.25);
        \draw[thick] (0,2.25) -- (0.75, 3.5) -- (1.5,2.25) -- (0, 2.25) node[color=white, pos=.5, above=.5] {$\text{MLP}_1$};

        \fill[blue!40!white] (2,2.25) -- (2.75, 3.5) -- (3.5,2.25) -- (2, 2.25);
        \draw[thick] (2,2.25) -- (2.75, 3.5) -- (3.5,2.25) -- (2, 2.25) node[color=white, pos=.5, above=.5] {$\text{MLP}_2$};
        
        \fill[blue!40!white] (4,2.25) -- (4.75, 3.5) -- (5.5,2.25) -- (4, 2.25);
        \draw[thick] (4,2.25) -- (4.75, 3.5) -- (5.5,2.25) -- (4, 2.25) node[color=white, pos=.5, above=.5] {$\text{MLP}_3$};

        \fill[blue!40!white] (2, 4.25) rectangle (3.5, 4.55);
        \draw[thick] (2, 4.25) rectangle (3.5, 4.55) node[color=white, pos=.5] {\small $x_{\text{emb}}$};

        \draw[thick, red=color, ->] (2.75, 1.5) -- (4.75, 2.25);
        \draw[thick, red=color, ->] (4.75, 3.5) -- (2.75, 4.25);

    \end{tikzpicture}
    \caption{Visualization of path-based compositional embeddings. The red arrows denote the selection of the embedding vector and its corresponding path of transformations.}
    \label{fig:path-based embeddings}
\end{figure}
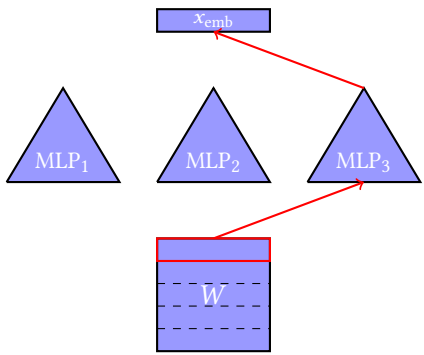

\section{Experiments}\label{sec: experiments}

In this section, we present a comprehensive set of experiments to test the quotient-remainder trick for reducing the number of parameters while preserving model loss and accuracy over many different operations. In particular, we show that quotient-remainder trick allows us to trade off model accuracy attained by full embedding tables with model size obtained with the hashing trick. 

For comparison, we consider both DCN \cite{wang2017deep} and Facebook DLRM networks. These two networks were selected as they are representative of most models for CTR prediction. We provide the model and experimental setup below.

\subsection{Model Specifications}


The DCN architecture considered in this paper consists of a deep network with 3 hidden layers consisting of 512, 256, and 64 nodes, respectively. The cross network consists of 6 layers. An embedding dimension of 16 is used across all categorical features.

The Facebook DLRM architecture consists of a bottom (or dense) MLP with 3 hidden layers with 512, 256, and 64 nodes, respectively, and an top (or output) MLP with 2 hidden layers consisting of 512 and 256 nodes. An embedding dimension of 16 is used. When thresholding, the concatenation operation uses an embedding dimension of 32 for non-compositional embeddings.

Note that in the baseline (using full embedding tables), the total number of rows in each table is determined by the cardinality of the category set for the table's corresponding feature.

\subsection{Experimental Setup and Data Pre-processing}

The experiments are performed on the Criteo Ad Kaggle Competition dataset\footnote{\scriptsize{\url{http://labs.criteo.com/2014/02/kaggle-display-advertising-challenge-dataset/}}} (Kaggle). Kaggle has 13 dense features and 26 categorical features. It consists of approximately 45 million datapoints sampled over 7 days. We use the first 6 days as the training set and split the 7th day equally into a validation and test set. The dense features are transformed using a log-transform. Unlabeled categorical features or labels are mapped to \texttt{NULL} or $0$, respectively.

Note that for this dataset, each category is preprocessed to map to its own index. However, it is common in practice to apply the hashing trick to map each category to an index in an online fashion and to randomize the categories prior to reducing the number of embedding rows using the remainder function. Our techniques may still be applied in addition to the initial hashing as a replacement to the remainder function to systematically reduce embedding sizes.

Each model is optimized using the Adagrad \cite{duchi2011adaptive} and AMSGrad \cite{kingma2014adam,reddi2019convergence} optimizers with their default hyperparameters; we choose the optimizer that yields the best validation loss. Single epoch training is used with a batch size of 128 and no regularization. All experiments are averaged over 5 trials. Both the mean and a single standard deviation are plotted. Here, we use an embedding dimension of 16. The model loss is evaluated using binary cross-entropy.

\begin{figure*}
    \centering
    \includegraphics[width=0.4\textwidth]{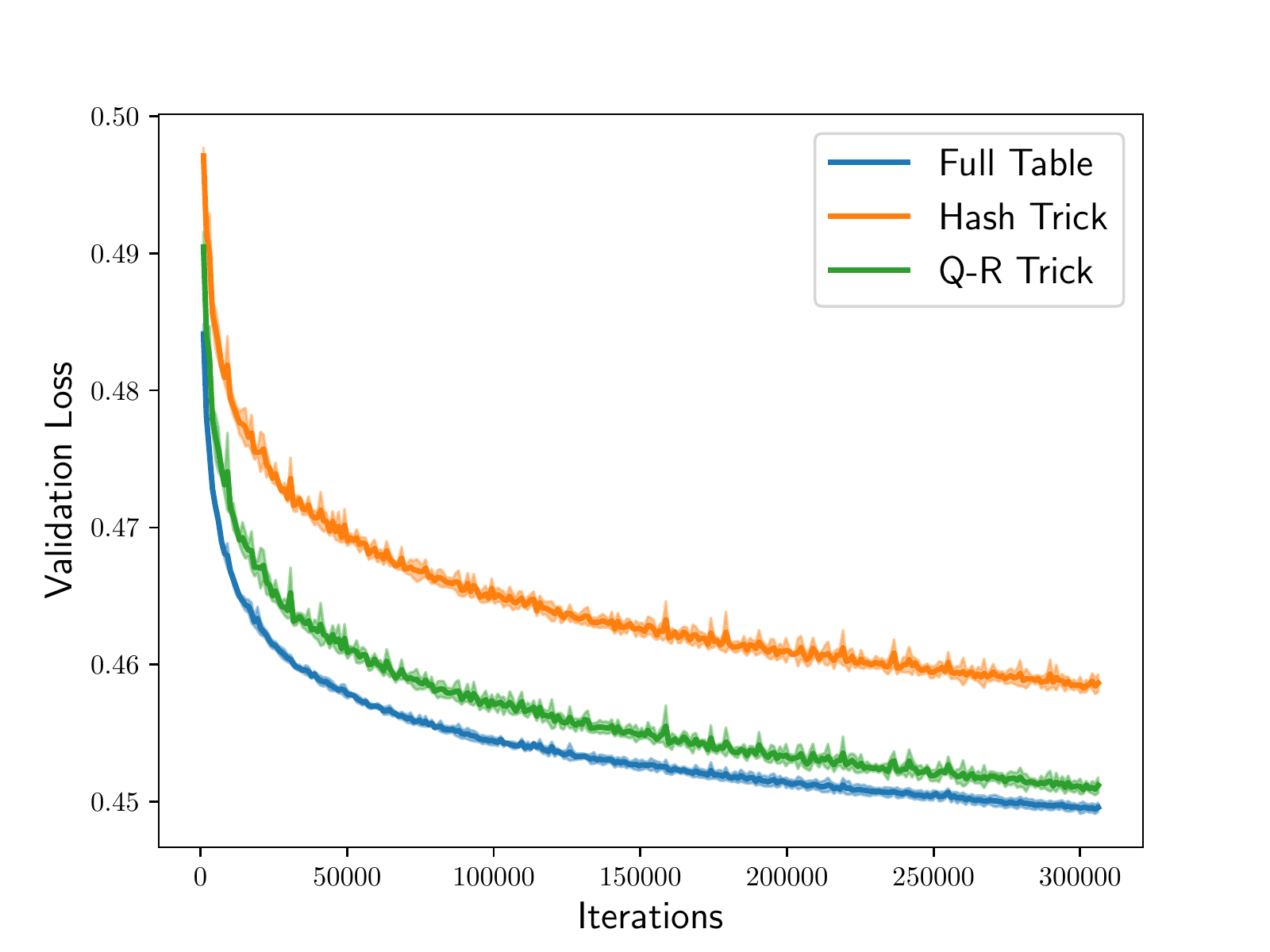}
    \includegraphics[width=0.4\textwidth]{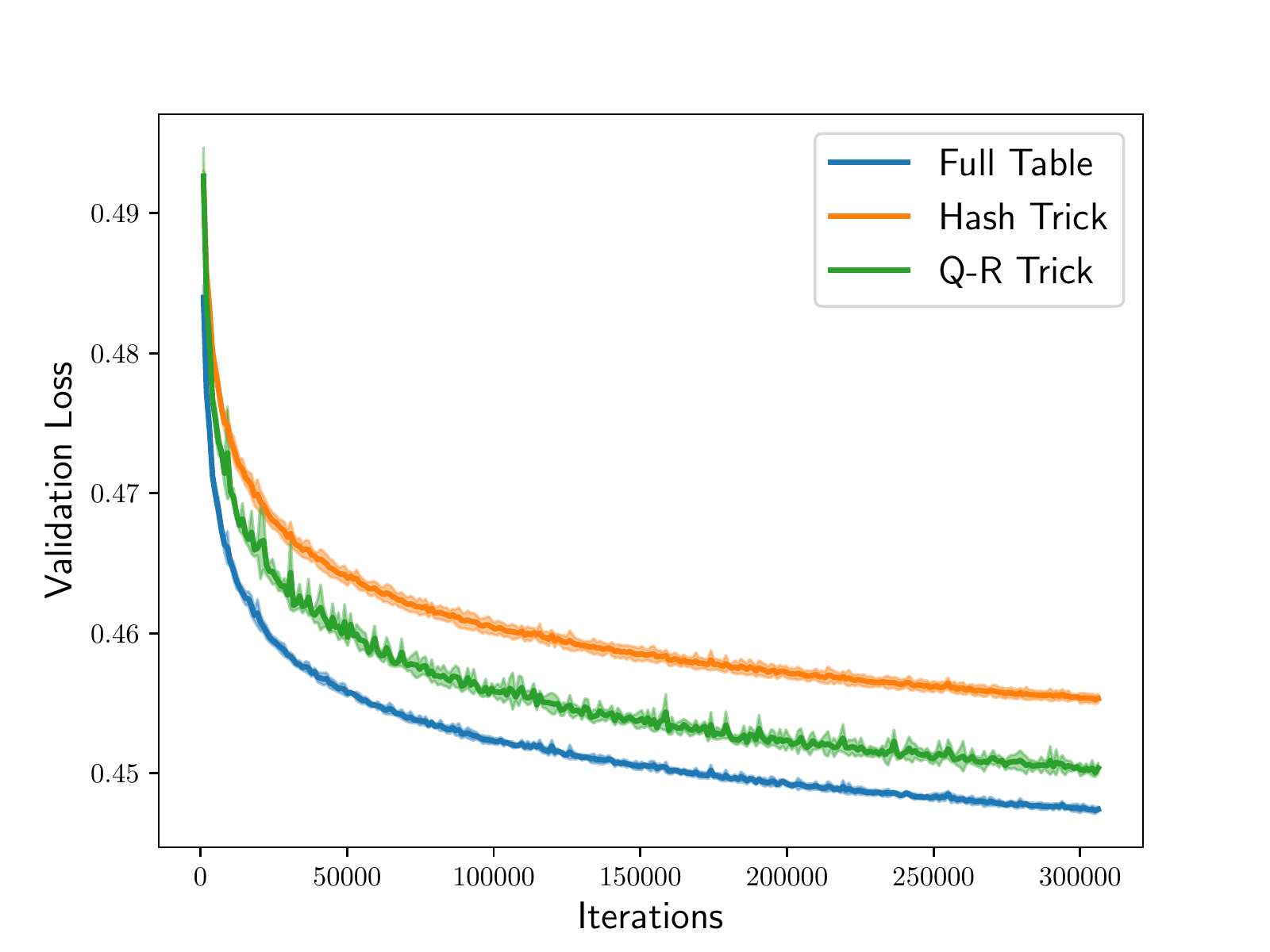}
    \caption{Validation loss against the number of iterations when training the DCN (left) and Facebook DLRM (right) networks over 5 trials. Both the mean and standard deviation are plotted. ``Full Table'' corresponds to the baseline using full embedding tables (without hashing), ``Hash Trick'' refers to the hashing trick, and ``Q-R Trick'' refers to the quotient-remainder trick (with element-wise multiplication). Note that the hashing trick and quotient-remainder trick result in an approximate $4 \times$ reduction in model size.}
    \label{fig:simple comparison}
\end{figure*}

To illustrate the quotient-remainder trick, we provide a simple comparison of the validation loss throughout training for full embedding tables, the hashing trick, and the quotient-remainder trick (with the element-wise multiplication) in Figure \ref{fig:simple comparison}. The model size reduction can be quite significant, since it scales directly with the the number of hash collisions. We enforce 4 hash collisions, yielding about a $4 \times$ reduction in model size. Each curve shows the average and standard deviation of the validation loss over 5 trials.

As expected, we see in Figure \ref{fig:simple comparison} that the quotient-remainder trick interpolates between the compression of the hashing trick and the accuracy attained by the full embedding tables. 

\subsection{Compositional Embeddings}

To provide a more comprehensive comparison, we vary the number of hash collisions enforced within each feature and plot the number of parameters against the test loss for each operation. We enforce between 2-7 and 60 hash collisions on each categorical feature. We plot our results in Figure \ref{fig:small hash collision}, where each point corresponds to the averaged result for a fixed number of hash collisions over 5 trials. In particular, since the number of embedding parameters dominate the total number of parameters in the entire network, the number of hash collisions is approximately inversely proportional to the number of parameters in the network.

\begin{figure*}[ht]
    \centering
    \includegraphics[width=0.4\textwidth]{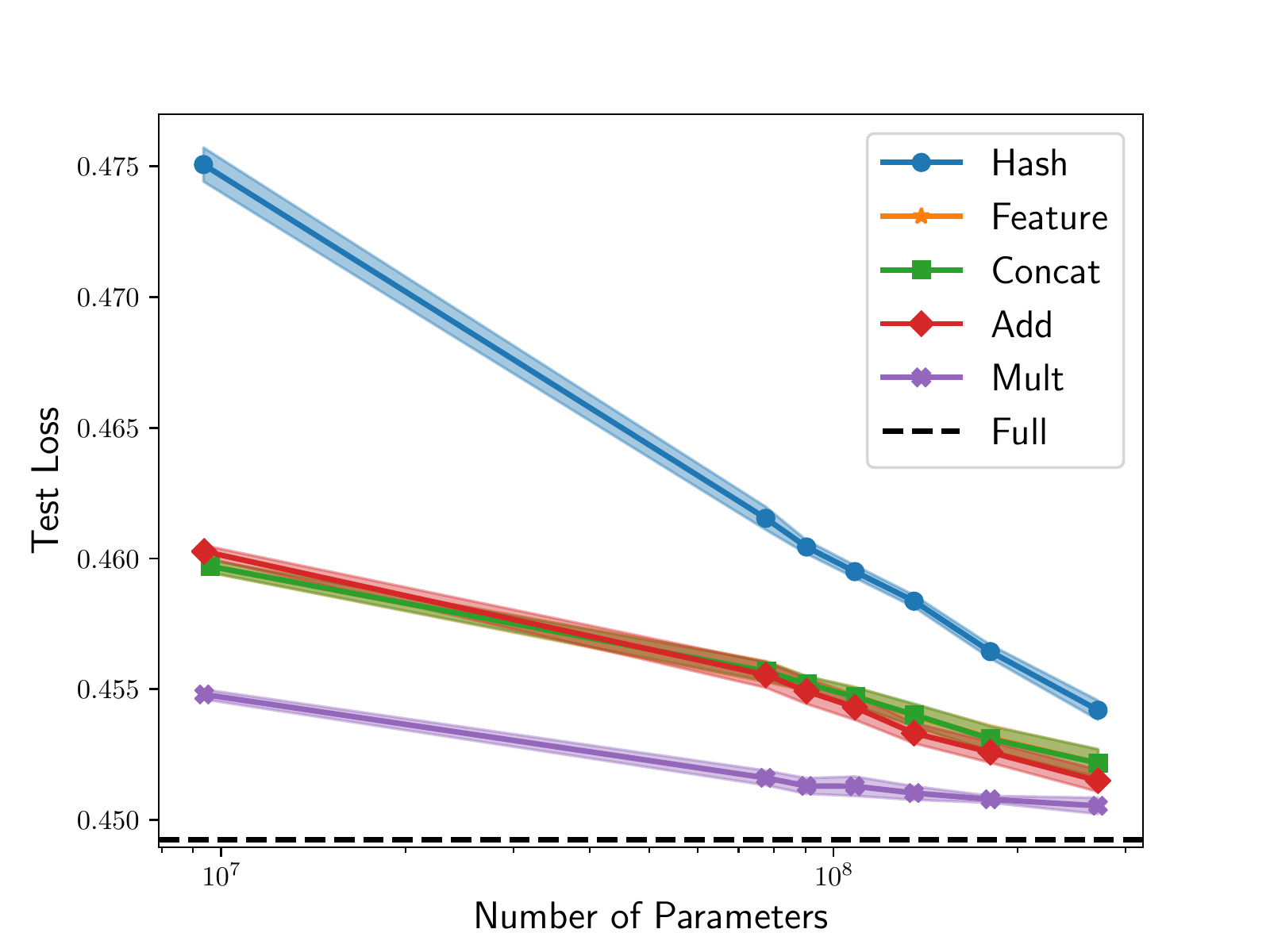}
    \includegraphics[width=0.4\textwidth]{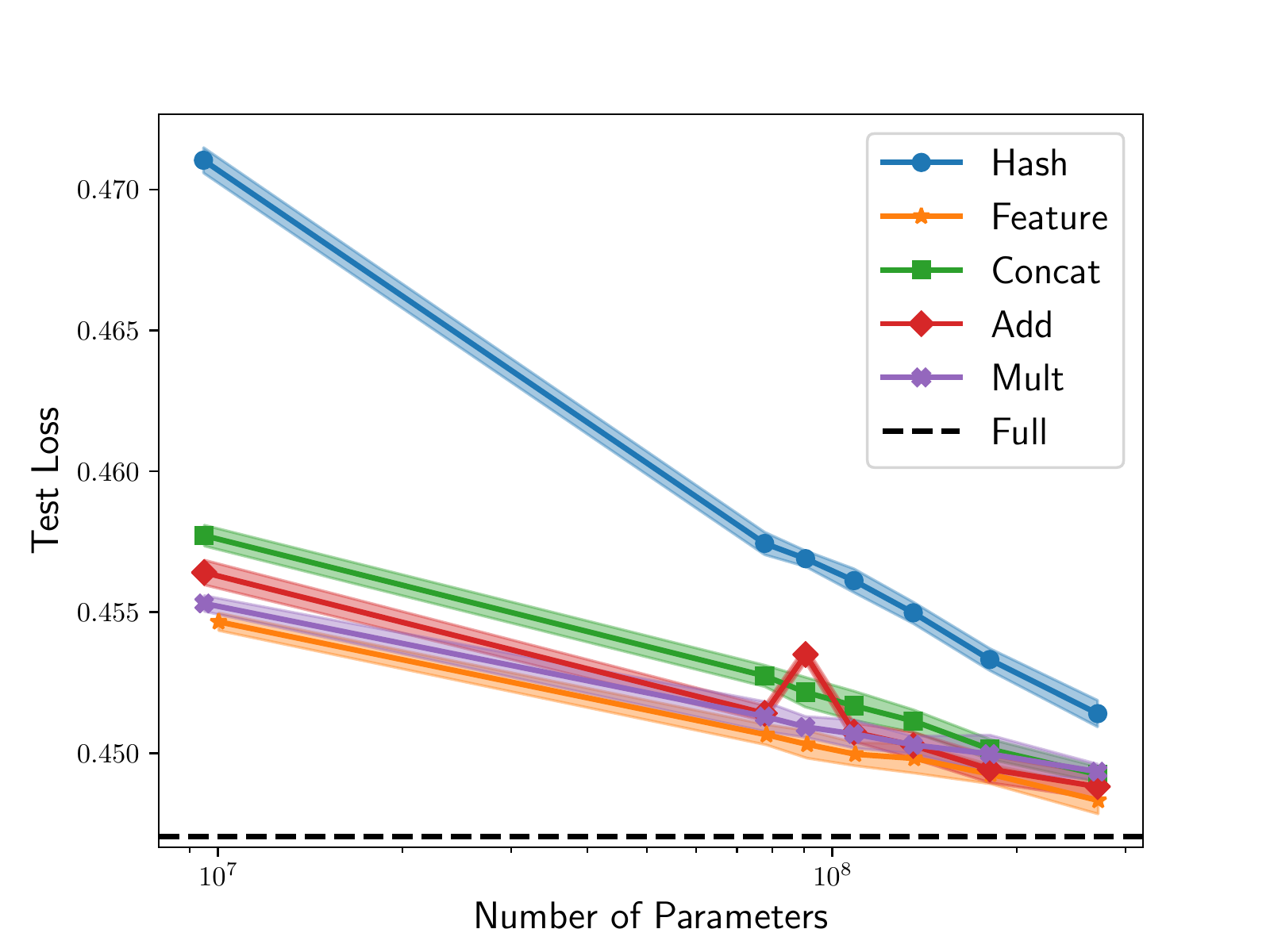}
    \caption{Test loss against the number of parameters for 2-7 and 60 hash collisions on DCN (left) and Facebook DLRM (right) networks over 5 trials. Both the mean and standard deviation are plotted. Hash, Feature, Concat, Add, and Mult correspond to different operations. Full corresponds to the baseline using full embedding tables (without hashing). The baseline model using full embedding tables contains approximately $5.4 \times 10^8$ parameters.}
    \label{fig:small hash collision}
\end{figure*}

The multiplication operation performs best overall, performing closely to the feature generation baseline which comes at the cost of an additional half-million parameters for the Facebook DLRM and significantly outperforming all other operations for DCN. Interestingly, we found that AMSGrad significantly outperformed Adagrad when using the multiplication operation. Compared to the hashing trick with 4 hash collisions, we were able to attain similar or better solution quality with up to 60 hash collisions, an approximately $15 \times$ smaller model. With up to 4 hash collisions, we are within 0.3\% of the baseline model for DCN and within 0.7\% of the baseline model for DLRM. Note that the baseline performance for DLRM outperforms DCN in this instance.

Because the number of categories within each categorical feature may vary widely, it may be useful to only apply the hashing trick to embedding tables with sizes larger than some threshold. To see the tradeoff due to thresholding, we consider the thresholds $\{1, 20, 200, 2000, 20000\}$ and plot the threshold number against the test loss for 4 hash collisions. For comparison, we include the result with the full embedding table as a baseline in Figure \ref{fig:small thresh}. We also examine the effect of thresholding on the total number of parameters in Figure \ref{fig: thresh parameters}.

\begin{figure*}[ht]
    \centering
    \includegraphics[width=0.4\textwidth]{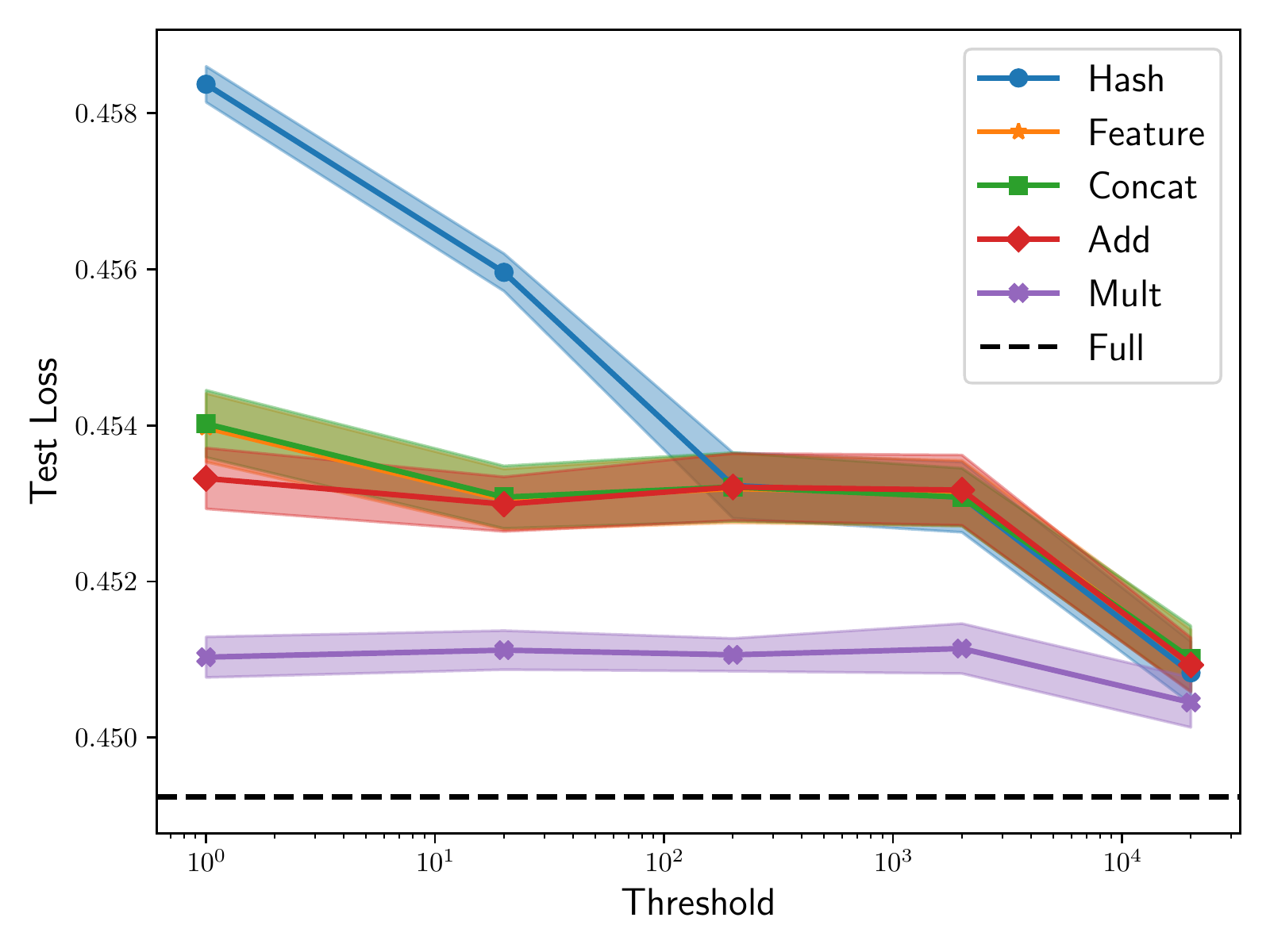}
    \includegraphics[width=0.4\textwidth]{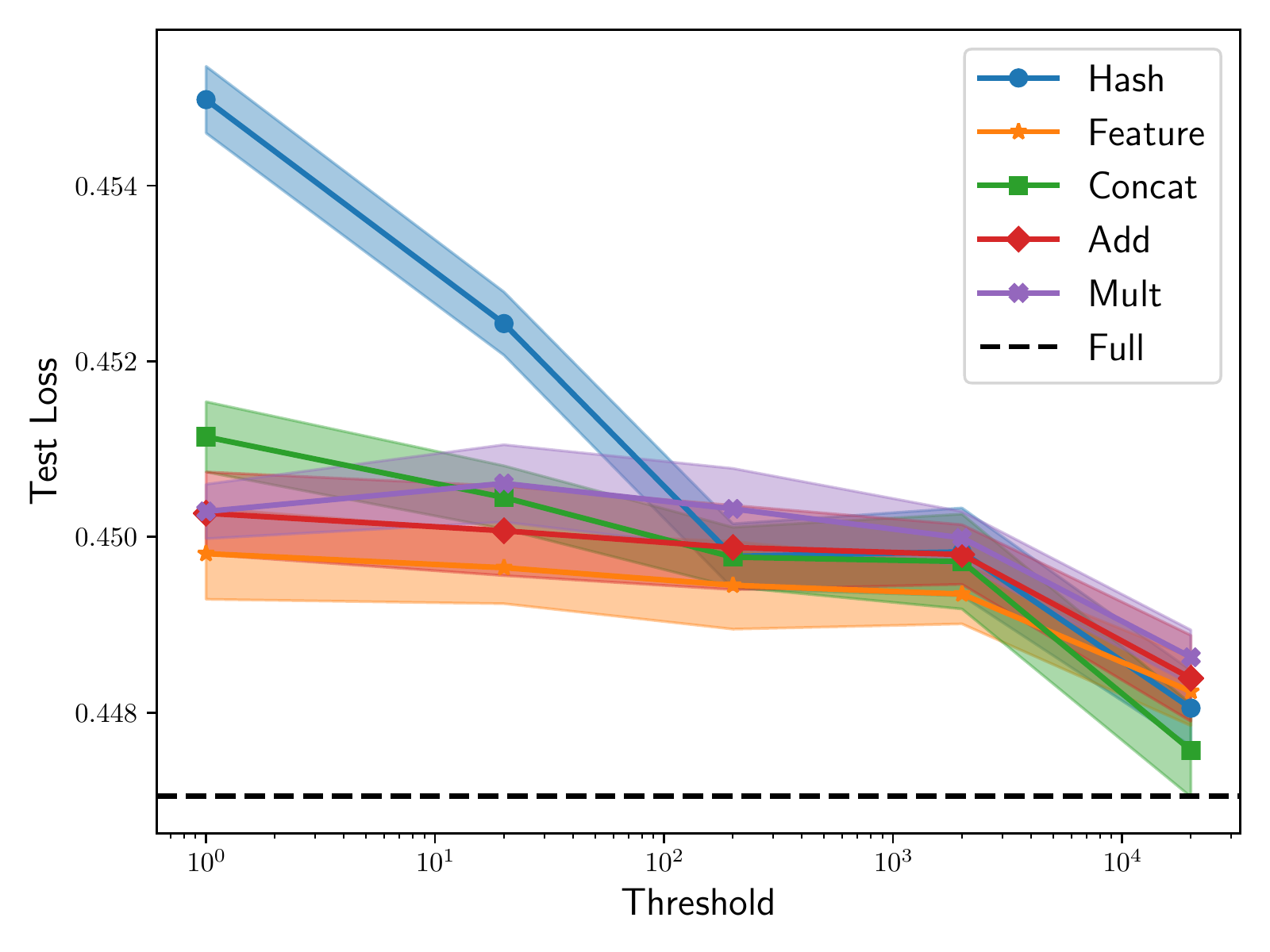}
    \caption{Test loss against the threshold number with 4 hash collisions on DCN (left) and Facebook DLRM (right) networks over 5 trials. Both the mean and standard deviation are plotted. Hash, Feature, Concat, Add, and Mult correspond to different operations. Full corresponds to the baseline using full embedding tables (without hashing). The baseline model using full embedding tables contains approximately $5.4 \times 10^8$ parameters.}
    \label{fig:small thresh}
\end{figure*}

\begin{figure*}
    \centering
    \includegraphics[width=0.4\textwidth]{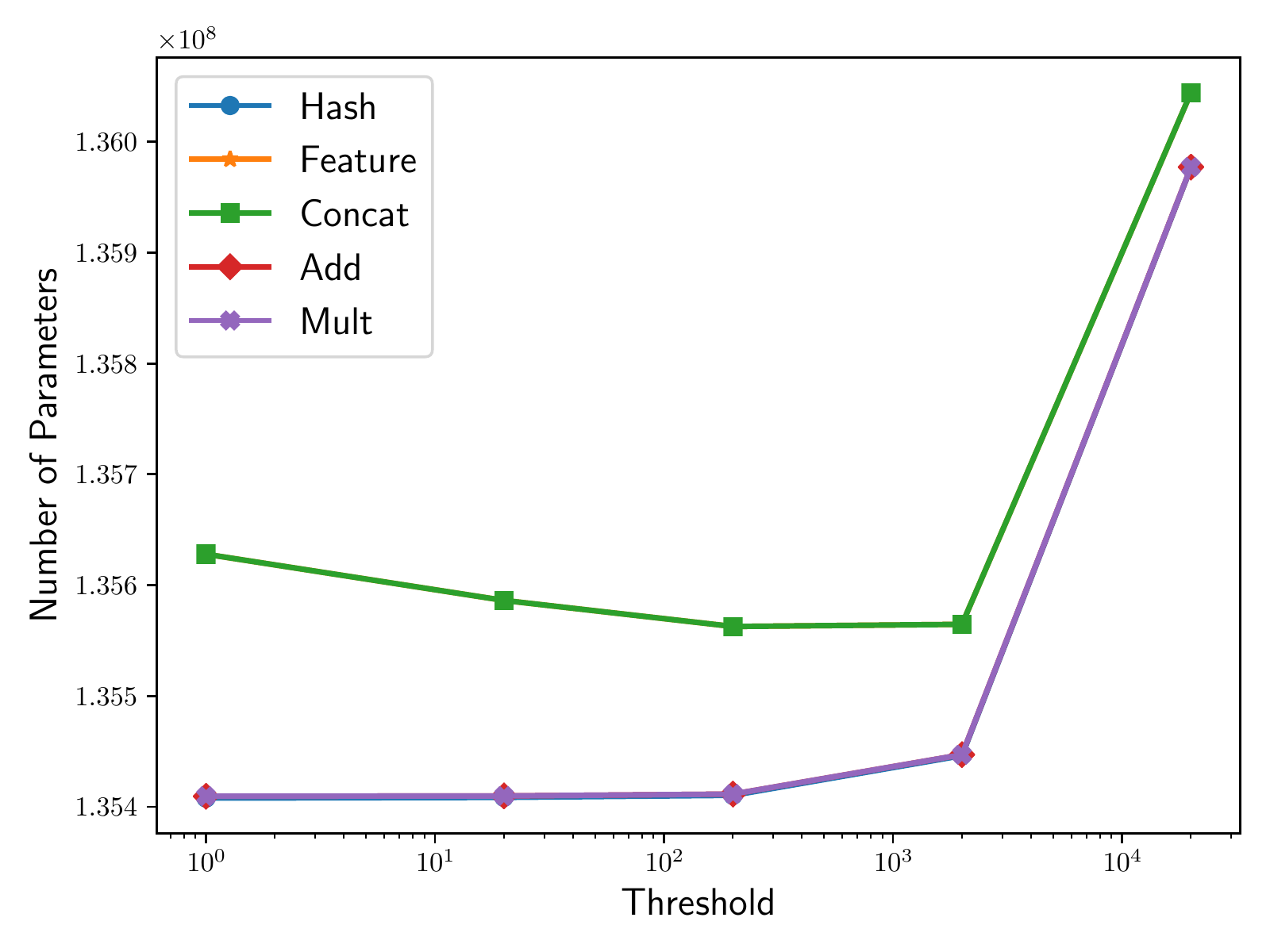}
    \includegraphics[width=0.4\textwidth]{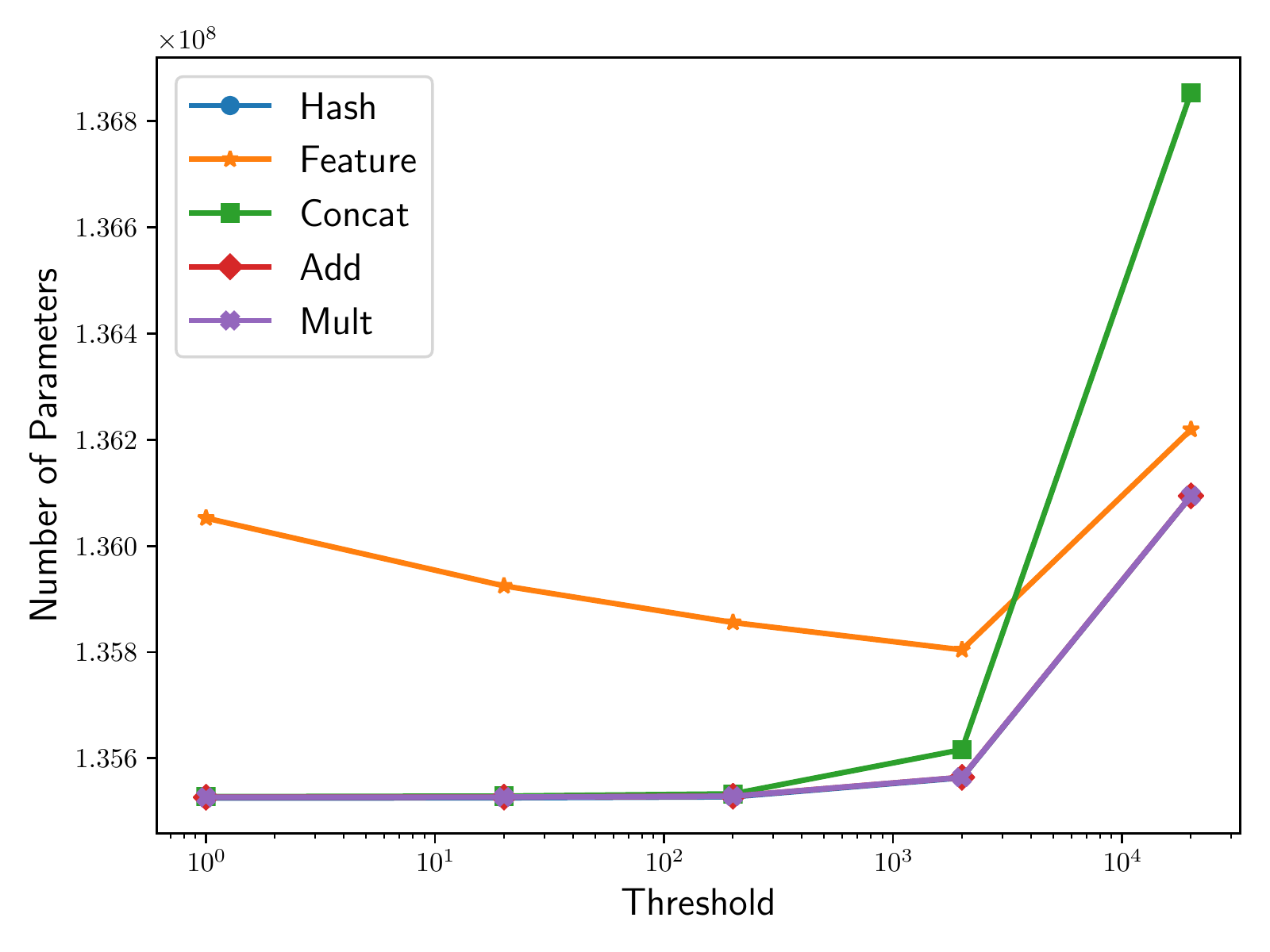}
    \caption{Number of parameters against the threshold number with 4 hash collisions on DCN (left) and Facebook DLRM (right) networks. Hash, Feature, Concat, Add, and Mult correspond to different operations. Full corresponds to the baseline using full embedding tables (without hashing). The baseline model using full embedding tables contains approximately $5.4 \times 10^8$ parameters.}
    \label{fig: thresh parameters}
\end{figure*}

We see that when thresholding is used, the results are much more nuanced and improvement in performance depends on the operation considered. In particular, we find that the element-wise multiplication works best for DCN, while the concatenation operation works better for Facebook DLRM. For DLRM, we were able to observe an improvement from a 0.7\% error to 0.5\% error to the baseline while maintaining an approximate $4 \times$ reduction in model size. 

\subsection{Path-Based Compositional Embeddings}

In the following set of preliminary experiments, we consider the quotient-remainder trick for path-based compositional embeddings. Here, we fix to 4 hash collisions and define an MLP with a single hidden layer of sizes 16, 32, 64, and 128. The results are shown in Table \ref{table: path-based compositional embeddings}.

\begin{table*}[ht]
    \centering
    \caption{Average test loss and number of parameters for different MLP sizes with 4 hash collisions over 5 trials.}
    \label{table: path-based compositional embeddings}
    \vskip 0.15in
    \begin{small}
    \begin{sc}
    \begin{tabular}{ll|llll}
    \toprule
    & \textbf{Hidden Layer} & 16        & 32        & 64        & 128       \\ \midrule
    \multirow{2}{*}{\textbf{DCN}} & \textbf{\# Parameters} & 135,464,410 & 135,519,322 & 135,629,146 & 135,848,794 \\
    & \textbf{Test Loss}            &  0.45263  & 0.45254  &  \textbf{0.45252}  &  0.4534   \\ \midrule
    \multirow{2}{*}{\textbf{DLRM}} & \textbf{\# Parameters} & 135,581,537 & 135,636,449 & 135,746,273 & 135,965,921 \\
    & \textbf{Test Loss}  &        0.45349  &  0.45312  &  \textbf{0.45306}  &  0.45651     \\
    \bottomrule
    \end{tabular}
    \end{sc}
    \end{small}
    \vskip -0.1in
\end{table*}

From Table \ref{table: path-based compositional embeddings}, we obtain an optimal hidden layer of size 64. The trend follows an intuitive tradeoff: using a smaller network may be easier to train but may not sufficiently transform the embeddings, while a larger network may have greater capacity to fit a more complex transformation but require more parameters to be learned. In this case, the modified DCN outperforms the modified DLRM network, although this is not true in general.

\subsection{Discussion of Tradeoffs}

As seen in Figure \ref{fig:small hash collision}, using the quotient-remainder trick enforces arbitrary structures on the categorical data which can yield arbitrary loss in performance, as expected. This yields a trade-off between memory and performance; a larger embedding table will yield better model quality, but at the cost of increased memory requirements. Similarly, using a more aggressive version of the quotient-remainder trick will yield smaller models, but lead to a reduction in model quality. Most models exponentially decrease in performance with the number of parameters. 

Both types of compositional embeddings reduce the number of parameters by implicitly enforcing some structure defined by the complementary partitions in the generation of each category's embedding. Hence, the quality of the model ought to depend on how closely the chosen partitions reflect intrinsic properties of the category set and their respective embeddings. In some problems, this structure may be identified; however, the dataset considered in this paper contains no additional knowledge of the categories. Since practitioners performing CTR prediction typically apply the hashing trick, our method clearly improves upon this baseline with only small additional cost in memory.

Path-based compositional embeddings also yield more compute-intensive models with the benefit of lower model complexity. Whereas the operation-based approach attempts to definitively operate on multiple coarse representations, path-based embeddings explicitly define the transformations on the representation, a more difficult but intriguing problem. Unfortunately, our preliminary experiments show that our current implementation of path-based compositional embeddings do not supersede operation-based compositional embeddings; however, we do believe that path-based compositional embeddings are potentially capable of producing improved results with improved modeling and training techniques, and are worthy of further investigation.

\section{Related Work}

Wide and deep models \cite{cheng2016wide} jointly train both a deep network and linear model to combine the benefits of memorization and generalization for recommendation systems. Factorization machines \cite{rendle2010factorization,rendle2012factorization} played a key role in the next step of development of DLRMs by identifying that sparse features (induced by nominal categorical data) could be appropriately exploited by interacting different dense representations of sparse features with an inner product to produce meaningful higher-order terms. Generalizing this observation, some recommendation models \cite{guo2017deepfm,he2017neural,wang2017deep,lian2018xdeepfm,zhou2018deep} jointly train a deep network with a specialized model in order to directly capture higher-order interactions in an efficient manner. The Facebook DLRM network \cite{naumov2019deep} mimics factorization machines more directly by passing the pairwise dot product between different embeddings into a multilayer perceptron (MLP). More sophisticated techniques that incorporate trees, memory, and (self-)attention mechanisms (to capture sequential user behavior) have also been proposed \cite{zheng2018mars,zhou2018atrank,zhou2018deepi,zhu2018learning}.

Towards the design of the embeddings, Naumov \cite{naumov2019dimensionality} proposed an approach for prescribing the embedding dimension based on the amount of information entropy contained within each categorical feature. Yin, et al. \cite{yin2018dimensionality} used perturbation theory for matrix factorization problems to similarly analyze the effect of the embedding dimension on the quality of classification. These methods focused primarily on the choice of $D$.

Much recent work on model compression also use compositional embeddings to reduce model complexity; see \cite{shu2017compressing,chen2018learning}. Most of these approaches require learning and storing discrete codes, similar to the idea of product quantization \cite{jegou2010product}, where each category's index is mapped to its corresponding embedding indices $i \mapsto (i_1, ..., i_m)$. In order to learn these codes during training, one is required to store them, hence requiring $O(|S| D)$ parameters with only potential to decrease $D$. Since $|S| \gg D$ in recommendation systems, these approaches unfortunately remain ineffective in our setting. 

Unlike prior approaches that focus on reducing $D$, our method seeks to directly reduce the embedding size $|S|$ using fixed codes that do not require additional storage while enforcing uniqueness of the final embedding. 

Related work by Khrulkov, et al. \cite{khrulkov2019tensorized} uses the Tensor Train decomposition to compress embeddings. Their work may be interpreted as a specific operation applied to each element of the embedding table, similar to the framework described here. Whereas their work emphasizes the application of tensor decompositions to embeddings, our work focuses on the key properties of the decomposition of the embeddings, while proposing other simpler approaches for generating compositional embeddings and discussing their tradeoffs.

\section{Conclusion}\label{sec: conclusion}

Modern recommendation systems, particularly for CTR prediction and personalization tasks, handle large amounts of categorical data by representing each category with embeddings that require multiple GBs each. We have proposed an improvement for reducing the number of embedding vectors that is easily implementable and applicable end-to-end while preserving uniqueness of the embedding representation for each category. We extensively tested multiple operations for composing embeddings from complementary partitions.

Based on our results, we suggest combining the use of thresholding with the quotient-remainder trick (and compositional embeddings) in practice. The appropriate operation depends on the network architecture; in these two cases, the element-wise multiplication operation appear to work well. This technique has been incorporated into the DLRM implementation available on Github\footnote{See \hyperlink{https://github.com/facebookresearch/dlrm}{https://github.com/facebookresearch/dlrm}}.

This work provides an improved trick for compressing embedding tables by reducing the number of embeddings in the recommendation setting, with room for design of more intricate operations. There are, however, general weaknesses of this framework; it does not take into account the frequency of categories or learn the intrinsic structure of the embeddings as in codebook learning. Although this would be ideal, we found that categorical features for CTR prediction or personalization are far less structured, with embedding sizes that often prohibit the storage of an explicit codebook during training. It remains to be seen if other compression techniques that utilize further structure within categorical data (such as codebook learning \cite{shu2017compressing,chen2018learning}) can be devised or generalized to end-to-end training and inference for CTR prediction.

\begin{acks}
We thank Tony Ginart, Jianyu Huang, Krishnakumar Nair, Jongsoo Park, Misha Smelyanskiy, and Chonglin Sun for their helpful comments. We also express our gratitude to Jorge Nocedal for his consistent support and encouragement.
\end{acks}

\bibliographystyle{ACM-Reference-Format}
\balance
\bibliography{bibl}


\begin{thebibliography}{23}


\ifx \showCODEN    \undefined \def \showCODEN     #1{\unskip}     \fi
\ifx \showDOI      \undefined \def \showDOI       #1{#1}\fi
\ifx \showISBNx    \undefined \def \showISBNx     #1{\unskip}     \fi
\ifx \showISBNxiii \undefined \def \showISBNxiii  #1{\unskip}     \fi
\ifx \showISSN     \undefined \def \showISSN      #1{\unskip}     \fi
\ifx \showLCCN     \undefined \def \showLCCN      #1{\unskip}     \fi
\ifx \shownote     \undefined \def \shownote      #1{#1}          \fi
\ifx \showarticletitle \undefined \def \showarticletitle #1{#1}   \fi
\ifx \showURL      \undefined \def \showURL       {\relax}        \fi
\providecommand\bibfield[2]{#2}
\providecommand\bibinfo[2]{#2}
\providecommand\natexlab[1]{#1}
\providecommand\showeprint[2][]{arXiv:#2}

\bibitem[\protect\citeauthoryear{Chen, Min, and Sun}{Chen
  et~al\mbox{.}}{2018}]%
        {chen2018learning}
\bibfield{author}{\bibinfo{person}{Ting Chen}, \bibinfo{person}{Martin~Renqiang
  Min}, {and} \bibinfo{person}{Yizhou Sun}.} \bibinfo{year}{2018}\natexlab{}.
\newblock \showarticletitle{Learning k-way d-dimensional discrete codes for
  compact embedding representations}.
\newblock \bibinfo{journal}{\emph{arXiv preprint arXiv:1806.09464}}
  (\bibinfo{year}{2018}).
\newblock


\bibitem[\protect\citeauthoryear{Cheng, Koc, Harmsen, Shaked, Chandra, Aradhye,
  Anderson, Corrado, Chai, Ispir, et~al\mbox{.}}{Cheng et~al\mbox{.}}{2016}]%
        {cheng2016wide}
\bibfield{author}{\bibinfo{person}{Heng-Tze Cheng}, \bibinfo{person}{Levent
  Koc}, \bibinfo{person}{Jeremiah Harmsen}, \bibinfo{person}{Tal Shaked},
  \bibinfo{person}{Tushar Chandra}, \bibinfo{person}{Hrishi Aradhye},
  \bibinfo{person}{Glen Anderson}, \bibinfo{person}{Greg Corrado},
  \bibinfo{person}{Wei Chai}, \bibinfo{person}{Mustafa Ispir}, {et~al\mbox{.}}}
  \bibinfo{year}{2016}\natexlab{}.
\newblock \showarticletitle{Wide \& deep learning for recommender systems}. In
  \bibinfo{booktitle}{\emph{Proceedings of the 1st workshop on deep learning
  for recommender systems}}. ACM, \bibinfo{pages}{7--10}.
\newblock


\bibitem[\protect\citeauthoryear{Duchi, Hazan, and Singer}{Duchi
  et~al\mbox{.}}{2011}]%
        {duchi2011adaptive}
\bibfield{author}{\bibinfo{person}{John Duchi}, \bibinfo{person}{Elad Hazan},
  {and} \bibinfo{person}{Yoram Singer}.} \bibinfo{year}{2011}\natexlab{}.
\newblock \showarticletitle{Adaptive subgradient methods for online learning
  and stochastic optimization}.
\newblock \bibinfo{journal}{\emph{Journal of Machine Learning Research}}
  \bibinfo{volume}{12}, \bibinfo{number}{Jul} (\bibinfo{year}{2011}),
  \bibinfo{pages}{2121--2159}.
\newblock


\bibitem[\protect\citeauthoryear{Guo, Tang, Ye, Li, and He}{Guo
  et~al\mbox{.}}{2017}]%
        {guo2017deepfm}
\bibfield{author}{\bibinfo{person}{Huifeng Guo}, \bibinfo{person}{Ruiming
  Tang}, \bibinfo{person}{Yunming Ye}, \bibinfo{person}{Zhenguo Li}, {and}
  \bibinfo{person}{Xiuqiang He}.} \bibinfo{year}{2017}\natexlab{}.
\newblock \showarticletitle{{DeepFM}: a factorization-machine based neural
  network for CTR prediction}.
\newblock \bibinfo{journal}{\emph{arXiv preprint arXiv:1703.04247}}
  (\bibinfo{year}{2017}).
\newblock


\bibitem[\protect\citeauthoryear{He, Liao, Zhang, Nie, Hu, and Chua}{He
  et~al\mbox{.}}{2017}]%
        {he2017neural}
\bibfield{author}{\bibinfo{person}{Xiangnan He}, \bibinfo{person}{Lizi Liao},
  \bibinfo{person}{Hanwang Zhang}, \bibinfo{person}{Liqiang Nie},
  \bibinfo{person}{Xia Hu}, {and} \bibinfo{person}{Tat-Seng Chua}.}
  \bibinfo{year}{2017}\natexlab{}.
\newblock \showarticletitle{Neural collaborative filtering}. In
  \bibinfo{booktitle}{\emph{Proceedings of the 26th international conference on
  world wide web}}. International World Wide Web Conferences Steering
  Committee, \bibinfo{pages}{173--182}.
\newblock


\bibitem[\protect\citeauthoryear{Jegou, Douze, and Schmid}{Jegou
  et~al\mbox{.}}{2010}]%
        {jegou2010product}
\bibfield{author}{\bibinfo{person}{Herve Jegou}, \bibinfo{person}{Matthijs
  Douze}, {and} \bibinfo{person}{Cordelia Schmid}.}
  \bibinfo{year}{2010}\natexlab{}.
\newblock \showarticletitle{Product quantization for nearest neighbor search}.
\newblock \bibinfo{journal}{\emph{IEEE transactions on pattern analysis and
  machine intelligence}} \bibinfo{volume}{33}, \bibinfo{number}{1}
  (\bibinfo{year}{2010}), \bibinfo{pages}{117--128}.
\newblock


\bibitem[\protect\citeauthoryear{Khrulkov, Hrinchuk, Mirvakhabova, and
  Oseledets}{Khrulkov et~al\mbox{.}}{2019}]%
        {khrulkov2019tensorized}
\bibfield{author}{\bibinfo{person}{Valentin Khrulkov}, \bibinfo{person}{Oleksii
  Hrinchuk}, \bibinfo{person}{Leyla Mirvakhabova}, {and} \bibinfo{person}{Ivan
  Oseledets}.} \bibinfo{year}{2019}\natexlab{}.
\newblock \showarticletitle{Tensorized Embedding Layers for Efficient Model
  Compression}.
\newblock \bibinfo{journal}{\emph{arXiv preprint arXiv:1901.10787}}
  (\bibinfo{year}{2019}).
\newblock


\bibitem[\protect\citeauthoryear{Kingma and Ba}{Kingma and Ba}{2014}]%
        {kingma2014adam}
\bibfield{author}{\bibinfo{person}{Diederik~P Kingma} {and}
  \bibinfo{person}{Jimmy Ba}.} \bibinfo{year}{2014}\natexlab{}.
\newblock \showarticletitle{Adam: A method for stochastic optimization}.
\newblock \bibinfo{journal}{\emph{arXiv preprint arXiv:1412.6980}}
  (\bibinfo{year}{2014}).
\newblock


\bibitem[\protect\citeauthoryear{Lian, Zhou, Zhang, Chen, Xie, and Sun}{Lian
  et~al\mbox{.}}{2018}]%
        {lian2018xdeepfm}
\bibfield{author}{\bibinfo{person}{Jianxun Lian}, \bibinfo{person}{Xiaohuan
  Zhou}, \bibinfo{person}{Fuzheng Zhang}, \bibinfo{person}{Zhongxia Chen},
  \bibinfo{person}{Xing Xie}, {and} \bibinfo{person}{Guangzhong Sun}.}
  \bibinfo{year}{2018}\natexlab{}.
\newblock \showarticletitle{{xDeepFM}: Combining explicit and implicit feature
  interactions for recommender systems}. In
  \bibinfo{booktitle}{\emph{Proceedings of the 24th ACM SIGKDD International
  Conference on Knowledge Discovery \& Data Mining}}. ACM,
  \bibinfo{pages}{1754--1763}.
\newblock


\bibitem[\protect\citeauthoryear{Naumov}{Naumov}{2019}]%
        {naumov2019dimensionality}
\bibfield{author}{\bibinfo{person}{Maxim Naumov}.}
  \bibinfo{year}{2019}\natexlab{}.
\newblock \showarticletitle{On the Dimensionality of Embeddings for Sparse
  Features and Data}.
\newblock \bibinfo{journal}{\emph{arXiv preprint arXiv:1901.02103}}
  (\bibinfo{year}{2019}).
\newblock


\bibitem[\protect\citeauthoryear{Naumov, Mudigere, Shi, Huang, Sundaraman,
  Park, Wang, Gupta, Wu, Azzolini, et~al\mbox{.}}{Naumov et~al\mbox{.}}{2019}]%
        {naumov2019deep}
\bibfield{author}{\bibinfo{person}{Maxim Naumov}, \bibinfo{person}{Dheevatsa
  Mudigere}, \bibinfo{person}{Hao-Jun~Michael Shi}, \bibinfo{person}{Jianyu
  Huang}, \bibinfo{person}{Narayanan Sundaraman}, \bibinfo{person}{Jongsoo
  Park}, \bibinfo{person}{Xiaodong Wang}, \bibinfo{person}{Udit Gupta},
  \bibinfo{person}{Carole-Jean Wu}, \bibinfo{person}{Alisson~G Azzolini},
  {et~al\mbox{.}}} \bibinfo{year}{2019}\natexlab{}.
\newblock \showarticletitle{Deep Learning Recommendation Model for
  Personalization and Recommendation Systems}.
\newblock \bibinfo{journal}{\emph{arXiv preprint arXiv:1906.00091}}
  (\bibinfo{year}{2019}).
\newblock


\bibitem[\protect\citeauthoryear{Reddi, Kale, and Kumar}{Reddi
  et~al\mbox{.}}{2019}]%
        {reddi2019convergence}
\bibfield{author}{\bibinfo{person}{Sashank~J Reddi}, \bibinfo{person}{Satyen
  Kale}, {and} \bibinfo{person}{Sanjiv Kumar}.}
  \bibinfo{year}{2019}\natexlab{}.
\newblock \showarticletitle{On the convergence of adam and beyond}.
\newblock \bibinfo{journal}{\emph{arXiv preprint arXiv:1904.09237}}
  (\bibinfo{year}{2019}).
\newblock


\bibitem[\protect\citeauthoryear{Rendle}{Rendle}{2010}]%
        {rendle2010factorization}
\bibfield{author}{\bibinfo{person}{Steffen Rendle}.}
  \bibinfo{year}{2010}\natexlab{}.
\newblock \showarticletitle{Factorization machines}. In
  \bibinfo{booktitle}{\emph{2010 IEEE International Conference on Data
  Mining}}. IEEE, \bibinfo{pages}{995--1000}.
\newblock


\bibitem[\protect\citeauthoryear{Rendle}{Rendle}{2012}]%
        {rendle2012factorization}
\bibfield{author}{\bibinfo{person}{Steffen Rendle}.}
  \bibinfo{year}{2012}\natexlab{}.
\newblock \showarticletitle{Factorization machines with {LibFM}}.
\newblock \bibinfo{journal}{\emph{ACM Transactions on Intelligent Systems and
  Technology (TIST)}} \bibinfo{volume}{3}, \bibinfo{number}{3}
  (\bibinfo{year}{2012}), \bibinfo{pages}{57}.
\newblock


\bibitem[\protect\citeauthoryear{Shu and Nakayama}{Shu and Nakayama}{2017}]%
        {shu2017compressing}
\bibfield{author}{\bibinfo{person}{Raphael Shu} {and} \bibinfo{person}{Hideki
  Nakayama}.} \bibinfo{year}{2017}\natexlab{}.
\newblock \showarticletitle{Compressing word embeddings via deep compositional
  code learning}.
\newblock \bibinfo{journal}{\emph{arXiv preprint arXiv:1711.01068}}
  (\bibinfo{year}{2017}).
\newblock


\bibitem[\protect\citeauthoryear{Wang, Fu, Fu, and Wang}{Wang
  et~al\mbox{.}}{2017}]%
        {wang2017deep}
\bibfield{author}{\bibinfo{person}{Ruoxi Wang}, \bibinfo{person}{Bin Fu},
  \bibinfo{person}{Gang Fu}, {and} \bibinfo{person}{Mingliang Wang}.}
  \bibinfo{year}{2017}\natexlab{}.
\newblock \showarticletitle{Deep \& cross network for ad click predictions}. In
  \bibinfo{booktitle}{\emph{Proceedings of the ADKDD'17}}. ACM,
  \bibinfo{pages}{12}.
\newblock


\bibitem[\protect\citeauthoryear{Weinberger, Dasgupta, Attenberg, Langford, and
  Smola}{Weinberger et~al\mbox{.}}{2009}]%
        {weinberger2009feature}
\bibfield{author}{\bibinfo{person}{Kilian Weinberger}, \bibinfo{person}{Anirban
  Dasgupta}, \bibinfo{person}{Josh Attenberg}, \bibinfo{person}{John Langford},
  {and} \bibinfo{person}{Alex Smola}.} \bibinfo{year}{2009}\natexlab{}.
\newblock \showarticletitle{Feature hashing for large scale multitask
  learning}.
\newblock \bibinfo{journal}{\emph{arXiv preprint arXiv:0902.2206}}
  (\bibinfo{year}{2009}).
\newblock


\bibitem[\protect\citeauthoryear{Yin and Shen}{Yin and Shen}{2018}]%
        {yin2018dimensionality}
\bibfield{author}{\bibinfo{person}{Zi Yin} {and} \bibinfo{person}{Yuanyuan
  Shen}.} \bibinfo{year}{2018}\natexlab{}.
\newblock \showarticletitle{On the dimensionality of word embedding}. In
  \bibinfo{booktitle}{\emph{Advances in Neural Information Processing
  Systems}}. \bibinfo{pages}{887--898}.
\newblock


\bibitem[\protect\citeauthoryear{Zheng, Lu, He, Xie, Noroozi, Huang, and
  Yu}{Zheng et~al\mbox{.}}{2018}]%
        {zheng2018mars}
\bibfield{author}{\bibinfo{person}{Lei Zheng}, \bibinfo{person}{Chun-Ta Lu},
  \bibinfo{person}{Lifang He}, \bibinfo{person}{Sihong Xie},
  \bibinfo{person}{Vahid Noroozi}, \bibinfo{person}{He Huang}, {and}
  \bibinfo{person}{Philip~S Yu}.} \bibinfo{year}{2018}\natexlab{}.
\newblock \showarticletitle{Mars: Memory attention-aware recommender system}.
\newblock \bibinfo{journal}{\emph{arXiv preprint arXiv:1805.07037}}
  (\bibinfo{year}{2018}).
\newblock


\bibitem[\protect\citeauthoryear{Zhou, Bai, Song, Liu, Zhao, Chen, and
  Gao}{Zhou et~al\mbox{.}}{2018a}]%
        {zhou2018atrank}
\bibfield{author}{\bibinfo{person}{Chang Zhou}, \bibinfo{person}{Jinze Bai},
  \bibinfo{person}{Junshuai Song}, \bibinfo{person}{Xiaofei Liu},
  \bibinfo{person}{Zhengchao Zhao}, \bibinfo{person}{Xiusi Chen}, {and}
  \bibinfo{person}{Jun Gao}.} \bibinfo{year}{2018}\natexlab{a}.
\newblock \showarticletitle{ATRank: An attention-based user behavior modeling
  framework for recommendation}. In \bibinfo{booktitle}{\emph{Thirty-Second
  AAAI Conference on Artificial Intelligence}}.
\newblock


\bibitem[\protect\citeauthoryear{Zhou, Mou, Fan, Pi, Bian, Zhou, Zhu, and
  Gai}{Zhou et~al\mbox{.}}{2018b}]%
        {zhou2018deepi}
\bibfield{author}{\bibinfo{person}{Guorui Zhou}, \bibinfo{person}{Na Mou},
  \bibinfo{person}{Ying Fan}, \bibinfo{person}{Qi Pi}, \bibinfo{person}{Weijie
  Bian}, \bibinfo{person}{Chang Zhou}, \bibinfo{person}{Xiaoqiang Zhu}, {and}
  \bibinfo{person}{Kun Gai}.} \bibinfo{year}{2018}\natexlab{b}.
\newblock \showarticletitle{Deep Interest Evolution Network for Click-Through
  Rate Prediction}.
\newblock \bibinfo{journal}{\emph{arXiv preprint arXiv:1809.03672}}
  (\bibinfo{year}{2018}).
\newblock


\bibitem[\protect\citeauthoryear{Zhou, Zhu, Song, Fan, Zhu, Ma, Yan, Jin, Li,
  and Gai}{Zhou et~al\mbox{.}}{2018c}]%
        {zhou2018deep}
\bibfield{author}{\bibinfo{person}{Guorui Zhou}, \bibinfo{person}{Xiaoqiang
  Zhu}, \bibinfo{person}{Chenru Song}, \bibinfo{person}{Ying Fan},
  \bibinfo{person}{Han Zhu}, \bibinfo{person}{Xiao Ma},
  \bibinfo{person}{Yanghui Yan}, \bibinfo{person}{Junqi Jin},
  \bibinfo{person}{Han Li}, {and} \bibinfo{person}{Kun Gai}.}
  \bibinfo{year}{2018}\natexlab{c}.
\newblock \showarticletitle{Deep interest network for click-through rate
  prediction}. In \bibinfo{booktitle}{\emph{Proceedings of the 24th ACM SIGKDD
  International Conference on Knowledge Discovery \& Data Mining}}. ACM,
  \bibinfo{pages}{1059--1068}.
\newblock


\bibitem[\protect\citeauthoryear{Zhu, Li, Zhang, Li, He, Li, and Gai}{Zhu
  et~al\mbox{.}}{2018}]%
        {zhu2018learning}
\bibfield{author}{\bibinfo{person}{Han Zhu}, \bibinfo{person}{Xiang Li},
  \bibinfo{person}{Pengye Zhang}, \bibinfo{person}{Guozheng Li},
  \bibinfo{person}{Jie He}, \bibinfo{person}{Han Li}, {and}
  \bibinfo{person}{Kun Gai}.} \bibinfo{year}{2018}\natexlab{}.
\newblock \showarticletitle{Learning Tree-based Deep Model for Recommender
  Systems}. In \bibinfo{booktitle}{\emph{Proceedings of the 24th ACM SIGKDD
  International Conference on Knowledge Discovery \& Data Mining}}. ACM,
  \bibinfo{pages}{1079--1088}.
\newblock


\end{thebibliography}

\appendix

\clearpage

\section{Background on Set Partitions, Equivalence Relations, and Equivalence Classes}

For completeness, we include the definitions of set partitions, equivalence relations, and equivalence classes, which we use extensively in the paper. 

\begin{definition}
Given a set $S$, a set partition $P$ is a family of sets such that: 
\begin{enumerate}
    \item $\emptyset \not\in P$
    \item $\bigcup_{A \in P} A = S$
    \item $A \cap B = \emptyset ~$ for all $A, B \in P$ where $A \neq B$.
\end{enumerate}
\end{definition}

\begin{definition}
A binary relation $\sim$ on a set $S$ is an equivalence relation if and only if for $a, b, c \in S$,
\begin{enumerate}
    \item $a \sim a$
    \item $a \sim b$ if and only if $b \sim a$
    \item If $a \sim b$ and $b \sim c$, then $a \sim c$.
\end{enumerate}
\end{definition}

\begin{definition}
Given an equivalence relation $R$ on $S$, the equivalence class of $a \in S$ is defined as 
$$[a]_R = \{b : (a, b) \in R \}.$$ 
\end{definition}

Given a set partition $P$, we can define an equivalence relation $R$ on $S$ defined as $(a, b) \in R$ if and only if $a, b \in A$ such that $A \in P$. (One can easily show that this binary relation is indeed an equivalence relation.) In words, $a$ is ``equivalent'' to $b$ if and only if $a$ and $b$ are in the same set of the partition. This equivalence relation yields a set of equivalence classes consisting of $[a]_P = [a]_R$ for $a \in S$.

As an example, consider a set of numbers $S = \{0, 1, 2, 3, 4\}$. One partition of $S$ is $P = \{ \{0, 1, 3\}, \{2, 4\} \}$. Then the equivalence classes are defined as $[0]_P = [1]_P = [3]_P = \{0, 1, 3\}$ and $[2]_P = [4]_P = \{2, 4\}$.

To see how this is relevant to compositional embeddings, consider the following partition of the set of categories $S$:
\begin{equation}
P = \left\{ \{x \in S : \varepsilon(x) \bmod m = l \} : l \in \sN \right\}
\end{equation}
where $m \in \sN$ is given. This partition induces an equivalence relation $R$ as defined above. Then each equivalence class $[x]_R$ consists of all elements whose remainder is the same, i.e. 
\begin{equation}
[x]_R = \{y : \varepsilon(y) \bmod m = \varepsilon(x) \bmod m\}.
\end{equation}
Mapping each equivalence class of this partition to a single embedding vector is hence equivalent to performing the hashing trick, as seen in Section \ref{sec: simple example}.

\section{Proof of Complementary Partition Examples}

In this section, we prove why each of the listed family of partitions are indeed complementary. Note that in order to show that these partitions are complementary, it is sufficient to show that for each pair of $x, y \in S$, there exists a partition $P$ such that $[x]_P \neq [y]_P$. 

\begin{enumerate}
    \item If $P = \{ \{x\} : x \in S \}$, then $P$ is a complementary partition. 
    \begin{proof}
    Note that since all $x \in S$ are in different sets by definition of $P$, $[x]_P \neq [y]_P$ for all $x \neq y$.
    
    \end{proof}
    \item Given $m \in \sN$, the partitions
    \begin{align*}
    P_1 & = \{ \{ x \in S : \varepsilon(x) \backslash m = l \} : l \in \mathcal{E}(\ceil{|S|/m}) \} \\
    P_2 & = \{ \{ x \in S : \varepsilon(x) \bmod m = l \} : l \in \mathcal{E}(m) \}
    \end{align*} 
    are complementary. 
    \begin{proof}
    Suppose that $x, y \in S$ such that $x \neq y$ and $[x]_{P_1} = [y]_{P_1}$. (If $[x]_{P_1} \neq [y]_{P_1}$, then we are done.) Then there exists an $l \in \sN$ such that $\varepsilon(x) \backslash m = l$ and $\varepsilon(y) \backslash m = l$. In other words, $\varepsilon(x) = ml + r_x$ and $\varepsilon(y) = ml + r_y$ where $r_x, r_y \in \{0, ..., m - 1\}$. Since $x \neq y$, we have that $\varepsilon(x) \neq \varepsilon(y)$, and hence $r_x \neq r_y$. Thus, $\varepsilon(x) \bmod m = r_x$ and $\varepsilon(y) \bmod m = r_y$, so $[x]_{P_2} \neq [y]_{P_2}$.
    
    \end{proof}
    \item Given $m_i \in \sN$ for $i = 1, ..., k$ such that $|S| \leq \prod_{i = 1}^k m_i$ , we can recursively define complementary partitions 
    \begin{align}
        P_1 & = \left\{ \{x \in S : \varepsilon(x) \bmod m_1 = l\} : l \in \mathcal{E}(m_1) \right\} \label{eq:gen div-rem base}\\ 
        P_j & = \left\{ \{ x \in S : \varepsilon(x) \backslash M_j \bmod m_j = l \} : l \in \mathcal{E}(m_j) \right\} \label{eq:gen div-rem recurse}
    \end{align}
    where $M_j = \prod_{i = 1}^{j - 1} m_i$ for $j = 2, ..., k$. Then $P_1, P_2, ..., P_k$ are complementary.
    \begin{proof}
    We can show this by induction. The base case is trivial. 
    
    Suppose that the statement holds for $k$, that is if $|S| \leq \prod_{i = 1}^k m_i$ for $m_i \in \sN$ for $i = 1, ..., k$ and $P_1, P_2, ..., P_k$ are defined by \eqref{eq:gen div-rem base} and \eqref{eq:gen div-rem recurse}, then $P_1, P_2, ..., P_k$ are complementary. We want to show that the statement holds for $k + 1$.
    
    Consider a factorization with $k + 1$ elements, that is, if $|S| \leq \prod_{i = 1}^{k + 1} m_i$ for $m_i \in \sN$ for $i = 1, ..., k + 1$. Let $P_j$ for $j = 1, ..., k$ be defined as in \eqref{eq:gen div-rem base} and \eqref{eq:gen div-rem recurse} with
    \begin{align}
        S_1^l & = \{x \in S : \varepsilon(x) \bmod m_1 = l \} \\
        S_j^l & = \{x \in S : \varepsilon(x) \backslash M_j  \bmod m_j = l \}
    \end{align}
    for all $l = 0, ..., m_j - 1$ and $j = 1, ..., k$.
    
    Let $y, z \in S$ with $y \neq z$. Since $y \neq z$, $\varepsilon(y) \neq \varepsilon(z)$. We want to show that $[y]_{P_i} \neq [z]_{P_i}$ for some $i$. We have two cases:
    
    \begin{enumerate}
        \item If $[y]_{P_{k + 1}} \neq [z]_{P_{k + 1}}$, then we are done.
        \item Suppose $[y]_{P_{k + 1}} = [z]_{P_{k + 1}}$. Then
        \begin{equation*}
            \varepsilon(x) \backslash M_j \bmod m_j = \varepsilon(y) \backslash M_j \bmod m_j = \tilde{l}
        \end{equation*}
        for some $\tilde{l} \in \mathcal{E}(m_{k + 1})$. Consider the subset
        \begin{equation*}
            \tilde{S} = \left\{ x \in S : \varepsilon(x) \backslash M_{k + 1} \bmod m_{k + 1} = \tilde{l} \right\}.
        \end{equation*}
        Note that $|\tilde{S}| \leq \prod_{i = 1}^k m_i$. We will define an enumeration over $\tilde{S}$ by
        \begin{equation*}
            \tilde{\varepsilon}(x) = \varepsilon(x) - \tilde{l} \prod_{i = 1}^k m_i.
        \end{equation*}
        Note that this is an enumeration since if $x \in \tilde{S}$, then
        \begin{equation*}
            \varepsilon(x) = \tilde{l} \prod_{i = 1}^k m_i + c
        \end{equation*} 
        for $c \in \{0, ..., \prod_{i = 1}^k m_i - 1\}$, so $\tilde{\varepsilon}(x) = c \in \{0, ..., \prod_{i = 1}^k m_i - 1\}$. This function is clearly a bijection on $\tilde{S}$ since $\varepsilon$ is a bijection. Using this new enumeration, we can define the sets 
        \begin{align*}
            \tilde{S}_1^l & = \{ x \in S : \varepsilon(x) \bmod m_1 = l \} \\
            \tilde{S}_j^l & = \{ x \in S : \varepsilon(x) \backslash M_j \bmod m_j = l \}
        \end{align*}
        where $M_j = \prod_{i = 1}^{j - 1} m_i$ for $l \in \mathcal{E}(m_j)$ and $j = 2, ..., k$, and their corresponding partitions
        \begin{align*}
            \tilde{P}_1 & = \{ \tilde{S}_1^l :  l \in \mathcal{E}(m_1) \} \\
            \tilde{P}_j & = \{ \tilde{S}_j^l : l \in \mathcal{E}(m_j) \}.
        \end{align*}
        Since $|\tilde{S}| \leq \prod_{i = 1}^k m_i$, by the inductive hypothesis, we have that this set of partitions are complementary. Thus, there exists an $i$ such that $[y]_{\tilde{P}_i} \neq [z]_{\tilde{P}_i}$.

        In order to show that this implies that $[y]_{P_i} \neq [z]_{P_i}$, one must show that $\tilde{S}_j^l \subseteq S_j^l$ for all $l = 0, ..., m_j - 1$ and $j = 1, ..., k$.
        
        To see this, since $\varepsilon(x) = \tilde{l} \prod_{i = 1}^k m_i + \tilde{\varepsilon}(x)$, by modular arithmetic we have
        \begin{equation*}
            \varepsilon(x) \bmod m_1 = \tilde{\varepsilon}(x) \bmod m_1
        \end{equation*}
        and
        \begin{align*}
            & \varepsilon(x) \backslash M_j \bmod m_j \\  
            & ~~~ = \varepsilon(x) \backslash \left( \prod_{i = 1}^{j - 1} m_i \right) \bmod m_j \\ 
            & ~~~ = \left( \tilde{l} \prod_{i = 1}^k m_i + \tilde{\varepsilon}(x) \right) \backslash \left( \prod_{i = 1}^{j - 1} m_i \right) \bmod m_j \\
            & ~~~ = \tilde{\varepsilon}(x) \backslash \left( \prod_{i = 1}^{j - 1} m_i \right) \bmod m_j
        \end{align*}
        for $j = 2, ..., k$. Thus, $\tilde{S}_j^l \subseteq S_j^l$ for all $l = 0, ..., m_j - 1$ and $j = 1, ..., k$ and we are done. 
        
    \end{enumerate}
    
    \end{proof}
    \item Consider a pairwise coprime factorization greater than or equal to $|S|$, that is, $|S| \leq \prod_{i = 1}^k m_i$ for $m_i \in \sN$ for all $i = 1, ..., k$ and $\gcd(m_i, m_j) = 1$ for all $i \neq j$. Then we can define the partitions
    \begin{equation*}
        P_j = \{ \{ x \in S : \varepsilon(x) \bmod m_j = l \} : l \in \mathcal{E}(m_j) \}
    \end{equation*}
    for $j = 1, ..., k$. Then $P_1, P_2, ..., P_k$ are complementary. 
    \begin{proof}
    Let $M = \prod_{i = 1}^k m_i$. Since $m_i$ for $i = 1, ..., k$ are pairwise coprime and $|S| \leq M$, by the Chinese Remainder Theorem, there exists a bijection $f : \sZ_{M} \rightarrow \sZ_{m_1} \times ... \times \sZ_{m_k}$ defined as $f(i) = (i \bmod m_1, ..., i \bmod m_k)$. Let $x, y \in S$ such that $x \neq y$. Then $f(\varepsilon(x)) \neq f(\varepsilon(y))$, and so there must exist an index $i$ such that $\varepsilon(x) \bmod m_i \neq \varepsilon(y) \bmod m_i$, as desired. Hence $[x]_{P_i} \neq [y]_{P_i}$.
    
    \end{proof}
\end{enumerate}

\section{Proof of Theorem \ref{thm: uniqueness}}

\setcounter{theorem}{0}
\begin{theorem}
Assume that vectors in each embedding table $\mW_j = \left[ \vw_1^j, ..., \vw_{|P_j|}^j \right]^T$ are distinct, that is $\vw_i^j \neq \vw_{\hat{i}}^j$ for $i \neq \hat{i}$ for all $j = 1, ..., k$. If the concatenation operation is used, then the compositional embedding of any category is unique, i.e. if $x, y \in S$ and $x \neq y$, then $\vx_{\text{emb}} \neq \vy_{\text{emb}}$.
\end{theorem}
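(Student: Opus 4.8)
The plan is to unwind the definition of complementary partitions directly and chain together three elementary implications. Fix two distinct categories $x, y \in S$ with $x \neq y$. Because $P_1, \dots, P_k$ are complementary, the defining property immediately supplies an index $i$ for which $[x]_{P_i} \neq [y]_{P_i}$. The entire proof hinges on propagating this single inequality through the construction of the concatenated embedding.

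First I would translate the statement $[x]_{P_i} \neq [y]_{P_i}$ into a statement about embedding indices. Since $p_i : S \to \{0, \dots, |P_i| - 1\}$ sends each category to the index of its equivalence class under $P_i$, distinct equivalence classes correspond to distinct indices, so $[x]_{P_i} \neq [y]_{P_i}$ yields $p_i(x) \neq p_i(y)$. Next I would invoke the per-table distinctness hypothesis: the rows of $\mW_i$ satisfy $\vw_a^i \neq \vw_b^i$ whenever $a \neq b$, and therefore $\vw_{p_i(x)}^i \neq \vw_{p_i(y)}^i$. Finally I would note that, under concatenation, the vector $\vx_{\text{emb}}$ contains $\mW_i^T \ve_{p_i(x)} = \vw_{p_i(x)}^i$ as its $i$-th block and $\vy_{\text{emb}}$ contains $\vw_{p_i(y)}^i$ in the identical coordinate range (the block dimensions $D_1, \dots, D_k$ are fixed and do not depend on the category). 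Two concatenated vectors are equal only if they agree on every block; since they disagree on block $i$, we conclude $\vx_{\text{emb}} \neq \vy_{\text{emb}}$.

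I expect no substantive obstacle here: the result is essentially a direct unwinding of the complementarity definition together with the distinctness assumption, and the only care required is the bookkeeping that links equivalence classes to embedding rows via $p_i$ and the observation that concatenation preserves block-wise disagreement. It is worth remarking, however, why the argument is stated for concatenation specifically: for operations such as addition or element-wise multiplication, a disagreement confined to a single block need not survive the combining step, since cancellation or collisions across the other blocks could restore equality. Hence per-table distinctness alone does not transfer to those operations, and the clean block-preservation property of concatenation is exactly what makes the proof immediate.
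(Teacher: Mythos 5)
Your proof is correct and follows essentially the same route as the paper's own argument: use complementarity to find a partition $P_i$ on which $x$ and $y$ disagree, invoke the per-table distinctness of rows to get $\vw_{p_i(x)}^i \neq \vw_{p_i(y)}^i$, and conclude via block-wise disagreement under concatenation. Your closing remark on why addition or element-wise multiplication would not inherit this guarantee is a useful observation, but the core proof matches the paper's.
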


\begin{proof}
Suppose that $x, y \in S$ and $x \neq y$. Let $P_1, P_2, ..., P_k$ be complementary partitions. Define $\mW_1 \in \R^{|P_1| \times D_1}, \mW_2 \in \R^{|P_2| \times D_2}, ..., \mW_k \in \R^{|P_k| \times D_k}$ to be their respective embedding tables. Since the concatenation operation is used, denote their corresponding final embeddings as 
\begin{align*}
    \vx_{\text{emb}} &= [\vx_1^T, \vx_2^T, ..., \vx_k^T]^T \\
    \vy_{\text{emb}} &= [\vy_1^T, \vy_2^T, ..., \vy_k^T]^T
\end{align*}
respectively, where $\vx_1, \vy_1 \in \R^{D_1}, \vx_2, \vy_2 \in \R^{D_2}, ..., \vx_k, \vy_k \in \R^{D_k}$ are embedding vectors from each corresponding partition's embedding table.  

Since $P_1, P_2, ..., P_k$ are complementary and $x \neq y$, $[x]_{P_j} \neq [y]_{P_j}$ for some $j$. Thus, since the embedding vectors in each embedding table is distinct, $\vx_j = (\mW_j)^T \ve_{p_j(x)} \neq (\mW_j)^T \ve_{p_j(y)} = \vy_j$. Hence, $\vx_{\text{emb}} \neq \vy_{\text{emb}}$, as desired.

\end{proof}

\end{document}